\documentclass[twoside]{article}

\usepackage[accepted]{aistats2020}

\setlength{\pdfpageheight}{11in}
\setlength{\pdfpagewidth}{8.5in}

\usepackage[round]{natbib}

\usepackage{amsmath,amsfonts,stmaryrd,amsthm,amssymb,enumitem}
\usepackage{algorithm,booktabs,url,hyperref}
\usepackage[color=yellow]{todonotes}

\usepackage{algorithmic}

\usepackage{pifont}

\usepackage{tikz}
\usetikzlibrary{arrows}
\usetikzlibrary{calc}
\usetikzlibrary{shapes}
\usetikzlibrary{fit}
\usetikzlibrary{matrix} 
\usetikzlibrary{positioning}
\usetikzlibrary{decorations.pathreplacing}

\newtheorem{theorem}{Theorem}[section]
\newtheorem{lemma}[theorem]{Lemma}

\newtheorem{definition}[theorem]{Definition}

\renewcommand{\eqref}[1]{Eq.~(\ref{eq:#1})}
\newcommand{\figref}[1]{Figure~\ref{fig:#1}}
        
\newcommand{\secref}[1]{Section~\ref{sec:#1}}
\newcommand{\thmref}[1]{Theorem~\ref{thm:#1}}
\newcommand{\lemref}[1]{Lemma~\ref{lem:#1}}

\newcommand{\algref}[1]{Alg.~\ref{alg:#1}}

\renewcommand{\P}{\mathbb{P}}
\newcommand{\E}{\mathbb{E}}
\newcommand{\Var}{\mathrm{Var}}

\newcommand{\nats}{\mathbb{N}}
\newcommand{\half}{{\frac12}}

\newcommand{\one}{\mathbb{I}}

\DeclareMathOperator*{\argmin}{argmin}

%
%





\newcommand{\cA}{\mathcal{A}}

\newcommand{\cX}{\mathcal{X}}

\usepackage{todonotes}

\newcommand{\algname}{\textsf{SKM}}

\newcommand{\OPT}{\mathrm{OPT}}

\newcommand{\ball}{\mathrm{Ball}}
\newcommand{\optalg}{\textsf{SKM2}}
\newcommand{\rsk}{r}

\setlength{\marginparwidth}{1.4cm}

\begin{document}

\newcommand{\papertitle}{Sequential no-Substitution $k$-Median-Clustering}

\twocolumn[

\aistatstitle{\papertitle}

\aistatsauthor{ Tom Hess \And Sivan Sabato }

\aistatsaddress{ Department of Computer Science \\
Ben-Gurion University of the Negev\\
Beer Sheva 8410501, Israel \And  Department of Computer Science \\
Ben-Gurion University of the Negev\\
Beer Sheva 8410501, Israel
} ]

\begin{abstract}

  We study the sample-based $k$-median clustering objective under a sequential
  setting without substitutions. In this setting, an i.i.d.~sequence of examples is observed. An example can be selected as a center only immediately after it is observed, and it cannot be substituted later. The goal is to select a set of centers with a good $k$-median cost on the distribution which generated the sequence. We provide an efficient algorithm for this
  setting, and show that its multiplicative approximation factor is twice the
  approximation factor of an efficient offline algorithm. In addition, we show
  that if efficiency requirements are removed, there is an algorithm that
  can obtain the same approximation factor as the best offline algorithm.
  We demonstrate in experiments the performance of the efficient algorithm on real data sets. Our code is available at  \url{https://github.com/tomhess/No_Substitution_K_Median}.

\end{abstract}

\section{Introduction}

Clustering is an important unsupervised task used for various applications, including, for instance, anomaly detection \citep{leung2005unsupervised}, recommender systems \citep{shepitsen2008personalized} and image segmentation  \citep{ng2006medical}.  The $k$-median clustering objective is particularly useful when the partition must be defined using centers from the data, as in some types of image categorization \citep{dueck2007non} and video summarization \citep{hadi2006video}. 
While clustering has been classically applied to fixed offline data, in recent years clustering on sequential data has become a topic of ongoing research, motivated by various applications where data is observed sequentially, such as detecting
communities in social networks \citep{aggarwal2005online}, online recommender systems \citep{nasraoui2007performance} and online data summarization \citep{badanidiyuru2014streaming}.
Previous work on clustering sequential data \cite[e.g.,][]{guha2000clustering,ailon2009streaming, ackermann2012streamkm++} has typically focused on cases where the main limitation is memory; the clustering needs to be done on massive amounts of data, and so the data cannot be kept in memory in full. In this work, we study sequential $k$-median clustering in a new setting, which we call the \emph{no-substitution} setting. In this setting, an i.i.d.~sequence of examples is observed. An example can be selected as a center only immediately after it is observed, and it cannot be substituted later. The goal is to select a set of centers with a good $k$-median cost on the distribution which generated the sequence. This is a natural extension to clustering of the problem of irrevocable item selection from
a sequence, which is well-studied in various other settings (see, e.g., \citealt{kesselheim2017submodular,babaioff2007knapsack,babaioff2008online}).

The no-substitution setting captures applications of clustering in which the selection of each center involves an immediate and irrevocable action in the real world. For instance, consider selecting a small set of users from those arriving to a shopping website. These users will receive an expensive promotional gift,  where the goal is to select the users who will be the most effective in spreading the word about the product. Assuming a budget of $k$ gifts, this can be formalized as a $k$-median objective, with respect to a metric defined by connections between users, where the selected users are the centers. Offering the gift to a user  must be done immediately, before the user leaves the website. The gift also cannot later be reassigned to another user. This is captured by the no-substitution setting. As another example, consider selecting participants for a medical experiment from a stream of patients. The participants should represent the population, formalized as a $k$-median objective, and each participant should be selected before leaving the reception desk. These two examples demonstrate the usefulness of the no-substitution setting for real-life applications.

\paragraph{Our contributions.} We study the no-substitution setting in a general metric space, assuming that the data sequence is sampled i.i.d.~from an unknown distribution, and the goal is to minimize the distribution risk of the selected centers. The focus of
this work is obtaining theoretical guarantees for this setting, given a predefined length of stream, a fixed number of centers and a given confidence parameter.  We provide a computationally efficient and practical algorithm, called \algname, which uses as a black box a given clustering algorithm which is not restricted to the no-substitution setting. 
We show that the multiplicative approximation factor obtained by \algname\ is twice the factor obtained by the black-box algorithm, and that this factor of $2$ is tight. We further provide another algorithm, called \optalg, which obtains the same approximation factor as the best possible (though not necessarily efficient) offline algorithm. However, the computational complexity of \optalg\ is exponential in $k$. Whether there exists an efficient no-substitution algorithm with the same approximation factor as the best efficient offline algorithm, is an open question which we leave for future work. Lastly, we demonstrate \algname, the efficient algorithm, on real data sets.

\subsection*{Related Work} \label{sec:relatedwork}

We are not aware of previous works which study the no-substitutions setting for $k$-median clustering defined above.\footnote{Citation of a follow-up work by other authors, which cites an earlier unpublished version of this work, was removed for the anonymous submission.} Below we review previous work in related settings. \cite{ben2007framework} studied sample-based $k$-median clustering in the offline setting. In this setting, the entire set of sampled data points is observed, and then the $k$ centers are selected from this sample. For the case of a general metric space, \cite{ben2007framework} provides uniform finite-sample bounds on the convergence of the sample risk to the distribution risk of any choice of centers from the sample.

Algorithms studying clustering on sequential data have mainly assumed a fixed data set and an adversarial ordering, under bounded memory. In this setting, the approximation is with respect to the optimal clustering of the data set.
\cite{guha2000clustering} proposed the first single-pass constant approximation algorithm for the $k$-median objective with bounded memory. \cite{ailon2009streaming, chen2009coresets, ackermann2012streamkm++} develop algorithms for this setting using coreset constructions.
\cite{charikar2003better} design algorithms based on the facility-location objective, using a procedure proposed in \cite{meyerson2001online}, which also studies facility location under a random arrival order. \cite{braverman2016new} suggests a space-efficient technique to extend any sample-based offline coreset construction to the streaming (bounded-memory) model. \cite{lang2018online} considers the streaming $k$-median problem under a random arrival order.
Unlike the no-substitution setting, these algorithms can repeatedly change their selection of centers, or simply select a center that has appeared sometime in the past.

\cite{liberty2016algorithm} studies the online $k$-means objective with an
arbitrary arrival order, in a setting where each observed point must either be
allocated to an already-defined cluster or start a new cluster. This setting
can be seen as a variant of the no-substitution setting, since a chosen center
cannot be discarded later. However, the proposed algorithm selects $O(k\log m)$ centers, where $m$ is the sample size, and it is shown that in this adversarial setting, one must select more than $k$ elements to obtain a bounded approximation factor. \cite{lattanzi2017consistent} propose an online $k$-median algorithm which 
minimizes the number of necessary recalculations of a clustering.

The no-substitution setting bears a resemblance to the secretary problem under a cardinality constraint. In this setting, a set of limited cardinality must be selected with no substitutions from a sequence of objects, so as to optimize a given objective. \cite{bateni2010submodular,feldman2011improved,kesselheim2017submodular} study this setting when the objective is monotone and submodular. \cite{badanidiyuru2014streaming} suggest reformulating the $k$-median objective as a submodular function. However, this reformulation does not preserve the approximation ratio of the $k$-median objective. It also requires access to an oracle for function value calculations, which is not readily available in the sample-based sequential clustering setting. \cite{SabatoHe18} study a more general problem of converting an offline algorithm to a no-substitution algorithm in an interactive setting.

\section{Setting and Preliminaries} \label{sec:preliminaries}

For an integer $i$, denote $[i] := \{1,\ldots,i\}$.  Let $(\cX,\rho)$ be a
bounded metric space, and assume $\rho \leq 1$. For $c \in \cX$ and $r\geq 0$, let
$\ball(c,r) := \{ x \in \cX \mid \rho(c,x) \leq r \}$. Assume a probability distribution $P$ over $\cX$. Below, we assume $X \sim P$, unless explicitly noted otherwise. For $B \subseteq \cX$, denote $\P[B]:=\P[X \in B]$. 
A $k$-clustering is a
set of $k$ points $T = \{t_1,\ldots,t_k\} \subseteq \cX$ which represent the centers of the clusters. Given a probability
distribution $P$, the
$k$-median risk of $T$ on $P$ is
$R(P,T) := \E[\min_{i \in [k]}\rho(X,t_i)]$. For a finite set
$S \subseteq \cX$, $R(S,T)$ is the risk of $T$ on the uniform
distribution over $S$. We will generally assume an i.i.d.~sample $S \sim P^m$. For convenience of presentation, we treat $S$ as both a sequence and as a set interchangeably, ignoring the possibility of duplicate examples in the sample. These can be easily handled by using multisets, and taking the necessary precautions when selecting an element from $S$. When a minimization with respect to $\rho$ is performed, we assume that ties are broken arbitrarily.

Denote by $\OPT  \in \argmin_{T \in \cX^k}R(P,T)$ a specific optimal solution of the $k$-median clustering problem, where the minimization is over all possible $k$-clusterings in $\cX$; we assume for simplicity that such an optimizer always exists. Denote by $\OPT_S \in \argmin_{T \in \cX^k}R(S,T)$ a solution that minimizes the risk on $S$ using centers from $\cX$. 

In the no-substitution $k$-median setting, the algorithm does not know the
distribution $P$. It observes the i.i.d.~sample $S\sim P^m$ in a
sequence and selects centers from $S$. Formally, there are $m$ time steps. At time step $t$, a single example $x_t \sim P$  is observed and can be selected as a center. $x_t$ cannot be selected as a center at a later time step. Moreover, once a center is selected, it cannot be removed or substituted. The algorithm can select  $k$ elements from $S$ as centers, to form the $k$-clustering $T$.  
 The objective is to obtain a small $R(P,T)$, compared to the optimal $R(P,\OPT)$. 
 
 An \emph{offline $k$-median algorithm} $\cA$ takes as input a finite set of points $S$ from $\cX$ and outputs a $k$-clustering $T\subseteq S$. We say that $\cA$ is a \emph{$\beta$-approximation} offline $k$-median algorithm, for some $\beta \geq 1$, if for all input sets $S$, $R(S,\cA(S)) \leq \beta \cdot R(S, \OPT_S)$.
It is well known \cite[e.g.,][]{guha2000clustering} that for any data set $S$, $R(S, \argmin_{T \in S^k}R(S,T)) \leq  2R(S,OPT_S)$, and that this upper bound is tight.\footnote{The tightness can be observed by considering a star graph where the metric is the shortest path between vertices, and the center of the star is in $\cX \setminus S$.} Therefore, the lowest possible value for $\beta$ in a general metric space is $2$.

For a non-negative function $f(k,m,\delta)$, we denote by $O(f(k,m,\delta))$ a function which is upper-bounded by $C\cdot f(k,m,\delta)$ for some universal constant $C$, for any integer $k$, $\delta \in (0,1)$, and sufficiently large $m$.

\section{An Efficient Algorithm: \algname}
\label{sec:SKM}

\newcommand{\qpoint}{\mathrm{qp}}
\newcommand{\qball}{\mathrm{qball}}
\newcommand{\out}{\mathrm{out}}

\begin{algorithm}[t]
\caption{SKM}
\begin{algorithmic}[1]\label{SKM}
\REQUIRE $k, m\in \nats$, $\delta \in (0,1)$,  offline $k$-median algorithm $\mathcal{A}$, sequential access to \mbox{$S = (x_i)_{i=1}^m \sim P^m$}  
\ENSURE A $k$-clustering $T_{\out}\subseteq S$.
\STATE $q \leftarrow \frac{43\ln(\frac{2 m^2}{ \delta})}{m}$; $T_{\out} \leftarrow \emptyset$
\STATE Get $m/2$ samples from $S$; $S_1 \leftarrow (x_1,\ldots ,x_{m/2})$.
\STATE Run $\cA$ on $S_1$, and set $\{c_1,\ldots c_k \} \leftarrow \cA(S_1)$.
\FOR{$j=m/2+1 \text{ to } m$}
\STATE Get the next sample $x_{j}$
\IF{$\exists i \in [k]$ such that $ x_{j} \in \qball_{S_1}(c_i, q)$ and $T_{\out} \cap \qball_{S_1}(c_i,q)    = \emptyset$ \label{step:condition}}
\STATE $T_{\out} \leftarrow T_{\out} \cup \{ x_{j} \}$.
\ENDIF
\ENDFOR
\RETURN $T_{\out}$
\end{algorithmic}
\end{algorithm}

The first algorithm that we propose is called \algname\ (Sequential K-Median). 
\algname\ works in two phases. In the first phase, the incoming elements are observed and no element is selected. In the second phase, elements are selected based on the information gained in the first phase. 
\algname\ receives as input a confidence parameter $\delta$, the number of clusters $k$, the sequence size $m$, and access to a black-box offline $k$-median algorithm $\cA$.
The main challenge in designing \algname\ is to define a selection rule for elements from the second phase, based on the information gained in the first phase. This information should have uniform finite-sample convergence properties, so that the error of the solution can be bounded. In addition, the selection rule should guarantee selecting $k$ centers with a high probability.
\algname\ constructs this rule by combining the solution of $\cA$, calculated on the examples of the first phase, with estimations on the distribution.

\algname\ is listed in Alg.~\ref{SKM}. It uses the following notation. Denote the elements observed in the first phase by $S_1$, and those in the second phase by $S_2$. Elements from $S_2$ are selected as centers if they are close to the centers calculated by $\cA$ for $S_1$. Importantly, closeness is measured relative to the distribution of distances in $S_1$: An element is considered close to a center if its distance is smaller than all but at most a $q$ fraction of the points in $S_1$. 
Formally, for $x,y \in S$, define  $B(x,y) := \P[\ball(x,\rho(x,y))]$. This is the probability mass of points whose distance from $x$ is at most the distance of $y$. For a set of points $S$ and $x,y \in S$, let $\hat{B}_S(x,y)$ be the fraction of the points in $S\setminus \{x,y\}$ that are in $\ball(x,\rho(x,y))$. 
For  $q \in (0,1)$, let $\qpoint_S(x,q)$ be some point $y \in \argmin\{\rho(x,y) \mid y \in S, \hat{B}_S(x,y) \geq q\}$.  
Denote the ball in $\cX$ with center $x$ and radius determined by $\qpoint_S(x,q)$ by $\qball_S(x,q) := \ball(x,\rho(x,\qpoint_S(x,q)))$. 

The computational complexity of \algname\ is  $O(km\log(m))$ plus the complexity of the black-box algorithm $\cA$. In a memory-restricted online setting, a small variant of \algname\ can be used, which calculates the clustering of $\cA$ on half of $S_1$ and finds the ball radii based on $q$ using the second half of $S_1$, and a memory of $O(k\log(m/\delta))$ examples. Combined with a black-box $\cA$ which is itself online and memory-restricted, the result is an online memory-restricted no-substitution algorithm.

\subsection{Risk upper bound for \algname}

The following theorem provide the guarantee for \algname.
\begin{theorem} \label{thm:mtheorem}
  Suppose that \algname\ is run with inputs $k \in \nats$, $m \geq \max(2k,24)$, $\delta \in (0,1)$ and $\cA$, where $\cA$ is a $\beta$-approximation offline $k$-median algorithm. 
  For any $\gamma\in (0,\frac{1}{2})$ and any distribution $P$ over $\cX$, with a probability at least $1-\delta$, 
\begin{align*}
  R(P,T_{\out}) &\leq   (2+2\gamma) \beta R(P,\OPT) \\&+   ((2+2\gamma)\beta+1)\sqrt{(k\ln(m/2) + \ln(4/\delta))/m} \\
  &+ (1/\gamma)\cdot (44k\log(2m^2/\delta))/m.
\end{align*}

\end{theorem}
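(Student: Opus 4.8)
The plan is to condition on a high-probability ``good'' event that bundles a handful of concentration statements, after which the inequality is deterministic and follows from the triangle inequality together with the approximation guarantee of $\cA$. Write $\{c_1,\dots,c_k\}=\cA(S_1)$ and $r_i=\rho\bigl(c_i,\qpoint_{S_1}(c_i,q)\bigr)$, and recall $|S_1|=|S_2|=m/2$. The good event is an intersection of three events, each of which I would arrange to fail with probability at most $\delta/4$ (so together they hold with probability $\ge1-\delta$, which is also the source of the $\ln(4/\delta)$): (a) the Ben-David uniform-convergence bound for sample-based $k$-median with centers from the sample, $\sup_{T\in S_1^k}|R(S_1,T)-R(P,T)|\le\varepsilon$ with $\varepsilon\le\sqrt{(k\ln(m/2)+\ln(4/\delta))/m}$, together with a concentration bound for the single fixed clustering $\OPT$, $R(S_1,\OPT)\le R(P,\OPT)+\varepsilon$; (b) a uniform concentration over all $\le m^2$ balls $\ball(x,\rho(x,y))$ with $x,y\in S_1$: since $q$ is of order $\ln(m^2/\delta)/m$, every such ball carrying a $q$-fraction of $S_1$ contains $\Theta(\ln(m^2/\delta))$ points, so a multiplicative Chernoff bound and a union bound over $\le m^2$ balls give that each $\qball_{S_1}(c_i,q)$ has $P$-mass in $[\,q/2,\,cq\,]$ for an absolute constant $c$; (c) conditionally on (b), each of the $k$ balls $\qball_{S_1}(c_i,q)$, having $P$-mass $\ge q/2=\Omega(\ln(m^2/\delta)/m)$, is hit by at least one of the $m/2$ second-phase points (Chernoff plus a union bound over $k\le m/2$ centers).

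On event (c), $T_{\out}\cap\qball_{S_1}(c_i,q)\neq\emptyset$ for every $i$: when the first second-phase point falling into $\qball_{S_1}(c_i,q)$ arrives, either that ball already holds a selected center, or the selection rule (Step~\ref{step:condition}) adds that point — possibly because of a different ball, but then the point still lies in $\qball_{S_1}(c_i,q)$. Fix $t_i\in T_{\out}\cap\qball_{S_1}(c_i,q)$, so $\rho(c_i,t_i)\le r_i$. For the risk, for each $x$ put $i^\ast(x):=\argmin_j\rho(x,c_j)$; then $\min_j\rho(x,t_j)\le\rho(x,t_{i^\ast(x)})\le\rho(x,c_{i^\ast(x)})+r_{i^\ast(x)}$. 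When $\rho(x,c_{i^\ast(x)})\ge r_{i^\ast(x)}$ this is at most $2\rho(x,c_{i^\ast(x)})$; otherwise $x\in\qball_{S_1}(c_{i^\ast(x)},q)$ and it is at most $2r_{i^\ast(x)}\le2$. Taking expectations and using the mass bound of (b) on each ball gives $R(P,T_{\out})\le 2R(P,\{c_1,\dots,c_k\})+2c\,qk$. Refining the concentration in (a) and (b) to a Bernstein-type form — which trades a multiplicative slack $1+\gamma$ against an additive term of order $1/\gamma$, using that the relevant variances are bounded by the corresponding risks since $\rho\le1$ — upgrades this to $R(P,T_{\out})\le(2+2\gamma)R(P,\{c_1,\dots,c_k\})+\tfrac1\gamma\cdot O(k\ln(m^2/\delta)/m)$, the last term matching the stated one after substituting $q=43\ln(2m^2/\delta)/m$ (the constant $44$ being a slightly inflated $43$ to absorb lower-order $O(1/m)$ slack from the exact cardinality of $\qball_{S_1}(c_i,q)\cap S_1$).

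It remains to control $R(P,\{c_1,\dots,c_k\})$: by (a) it is at most $R(S_1,\cA(S_1))+\varepsilon$, and since $\cA$ is a $\beta$-approximation and $\OPT\in\cX^k$, $R(S_1,\cA(S_1))\le\beta\,R(S_1,\OPT_{S_1})\le\beta\,R(S_1,\OPT)\le\beta\bigl(R(P,\OPT)+\varepsilon\bigr)$. Substituting back yields $R(P,T_{\out})\le(2+2\gamma)\beta\,R(P,\OPT)+\bigl((2+2\gamma)\beta+1\bigr)\varepsilon+\tfrac1\gamma\cdot O(k\ln(m^2/\delta)/m)$, which is the claim; the hypothesis $m\ge\max(2k,24)$ is what makes the $m/2$ split and the Chernoff regimes valid.

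I expect the delicate part to be the combination of the pointwise triangle-inequality step with the concentration: getting the base multiplicative factor $2$ from $\rho(x,t_i)\le\rho(x,c_i)+r_i$ is routine, but keeping the ``leftover'' region — points so close to some $c_i$ that the additive $r_i$, not $2\rho(x,c_i)$, governs their cost — of total $P$-mass only $O(qk)$ is exactly what forces $q=\Theta(\ln(m^2/\delta)/m)$ and the uniform concentration of all $O(m^2)$ sample-centered ball masses in (b), and threading the parameter $\gamma$ so that it appears consistently in both the multiplicative factor and the $1/\gamma$ additive factor requires careful bookkeeping. A secondary point worth flagging is that $T_{\out}\subseteq S_2$ is disjoint from $S_1$, so the uniform-convergence bound (a) cannot be applied to $T_{\out}$ directly; its risk has to be routed through $\{c_1,\dots,c_k\}=\cA(S_1)\in S_1^k$ via the triangle inequality, as above.
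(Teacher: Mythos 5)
Your overall architecture differs from the paper's in one structural choice, and that choice is the source of the real problems. The paper never compares $R(P,\cA(S_1))$ to $R(S_1,\cA(S_1))$ and never needs the true probability mass of any ball: it applies the center-replacement analysis to the \emph{empirical} distribution on $S_1$ (\lemref{concentrate} with $Q$ uniform on $S_1$), where the condition $B^o_Q(c_i,t_i)\le q+1/m_1$ is a deterministic counting consequence of the minimality in the definition of $\qpoint_{S_1}$, and then transfers $R(S_1,T_{\out})$ to $R(P,T_{\out})$ by Hoeffding plus a union bound over the clusterings contained in $S_2$ --- these are independent of $S_1$, so each is a fixed clustering with respect to $S_1$ and the explicit constant $\sqrt{(k\ln(m/2)+\ln(4/\delta))/m}$ falls out. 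You instead do the replacement analysis on $P$ and route the clustering risk through $R(P,\cA(S_1))\le R(S_1,\cA(S_1))+\varepsilon$. Two concrete gaps follow. First, $\cA(S_1)$ is a subset of $S_1$ chosen as a function of $S_1$, so your event (a), $\sup_{T\in S_1^k}|R(S_1,T)-R(P,T)|\le\sqrt{(k\ln(m/2)+\ln(4/\delta))/m}$, is not obtainable by a union bound over fixed clusterings; it is the Ben-David uniform-convergence result, which is only available with an unspecified $O(\cdot)$ constant, whereas the theorem's coefficient $\bigl((2+2\gamma)\beta+1\bigr)$ multiplies exactly that expression with constant $1$. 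Second, the upper half of your event (b) --- that the closed ball $\qball_{S_1}(c_i,q)$ has $P$-mass at most $cq$ --- is false in general: if many sample points sit at exactly the threshold distance, the closed ball's empirical and true masses can far exceed $q$. What your case analysis actually needs is the open ball $\{x:\rho(x,c_i)<r_i\}$, whose empirical $S_1$-mass is below $q+O(1/m_1)$ by minimality of $\qpoint$; a ``small empirical implies small true'' bound over the at most $m_1^2$ pair-defined open balls would repair this, but it is an extra concentration step, with its own union bound and constants, that the paper's $S_1$-side analysis avoids entirely.

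A smaller but telling point: your attribution of the $(2+2\gamma)$ versus $(1/\gamma)$ trade-off to a ``Bernstein-type refinement of the concentration'' is not where it comes from. In the paper it arises from a per-cluster case split (\lemref{quantilemarkov} and \lemref{concentrate}): clusters in which the displaced region exceeds a $\gamma$-fraction of the cluster have total mass at most $k\tau/\gamma$ and are charged additively via $\rho\le 1$, and the remaining clusters get the multiplicative $(2+2\gamma)$ from the triangle inequality combined with $r\le R(Q,\{c\})/(1-\gamma)$. Your own pointwise argument, once the mass bound in (b) is repaired, actually yields the cleaner $R(P,T_{\out})\le 2\,R(P,\cA(S_1))+O(kq)$ with no $\gamma$ at all, which implies the stated form for every $\gamma\in(0,\tfrac12)$; so the Bernstein hand-wave is unnecessary, and its presence suggests the mechanism producing the stated constants was guessed rather than derived. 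The parts of your proposal that do match the paper --- the two-phase conditioning, the lower bound $\P[\qball_{S_1}(c_i,q)]\ge q/2$ guaranteeing that every ball is hit in the second phase, and the chain $R(S_1,\cA(S_1))\le\beta R(S_1,\OPT_{S_1})\le\beta R(S_1,\OPT)\le\beta(R(P,\OPT)+\varepsilon)$ --- are correct.
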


\thmref{mtheorem} gives a range of trade-offs between additive and multiplicative errors, depending on the value of $\gamma$. In particular, by setting $\gamma = \sqrt{(k \log(m/\delta))/m}$ and noting that $R(P,\OPT) \leq 1$, we get 
\begin{align}\label{eq:mtheorem}
  R(P,T_{\out}) &\leq 2\beta R(P,\OPT) 
                  +\beta\cdot O(\sqrt{\frac{k\log(\frac{m}{\delta})}{m} }).
\end{align}

This guarantee can be compared to the guarantee of an offline algorithm that uses the same $k$-median algorithm $\cA$ as a black box. As shown in \cite{ben2007framework}, for $S \sim P^m$, with a probability at least $1 -\delta$, for every $k$-clustering $T \subseteq S$ and for $T=\OPT$,
\begin{align}\label{eq:bendavid}
  &|R(P,T)-R(S,T)|
    \leq O(\sqrt{\frac{k\ln m + \ln(\frac{1}{\delta})}{m}}).
\end{align}
Denote the RHS by $O(f(m,k,\delta))$. Therefore,
  \begin{align*}
    R(S,\mathcal{A}(S)) &\leq \beta R(S,\OPT_S) \leq \beta R(S,\OPT)\\
    &\leq \beta R(P,\OPT) +\beta\cdot O(f(m,k,\delta)).
  \end{align*}
  
Since \eqref{bendavid} holds also for $T = \cA(S)$, it follows that $R(P,\cA(S)) \leq \beta R(P,\OPT) + \beta \cdot O(f(m,k,\delta)).$ Therefore,  the additive errors of this guarantee and that of \thmref{mtheorem} have a similar dependence on $m,k$, $\delta$ and $\beta$. When $m \rightarrow \infty$, the additive errors go to zero, and there remains the approximation factor of $2\beta$ for \algname, instead of $\beta$ for the offline algorithm. We show in \secref{lowerbound} that the $2\beta$ approximation factor is tight.

To prove \thmref{mtheorem}, we first prove that with a high probability, \algname\ succeeds in selecting $k$ centers from $S_2$. This requires showing that the estimate of the mass of $\qball_{S_1}(c_i,q)$ using $S_1$ is close to its true mass on the distribution. 
We use the following lemma, proved in the supplementary material using the empirical Bernstein's inequality of \cite{maurer2009empirical}:
\begin{lemma}\label{lem:empbernstein}
  Let $Y_1,\ldots,Y_n$ be i.i.d.~random variables over $[0,1]$ with mean $\mu$. Let $\hat{\mu} = \frac{1}{n}\sum_{i\in[n]}Y_i$ be their empirical mean. Then, with a probability at least $1-\delta$, $\hat{\mu} \leq \max(16\ln(\frac{2}{\delta})/(n-1),2\mu)$.
\end{lemma}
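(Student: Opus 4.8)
The plan is to apply the empirical Bernstein inequality of \cite{maurer2009empirical}, which states that with probability at least $1-\delta$,
\begin{align*}
  \mu \leq \hat\mu + \sqrt{\frac{2\hat V_n\ln(2/\delta)}{n}} + \frac{7\ln(2/\delta)}{3(n-1)},
\end{align*}
where $\hat V_n$ is the sample variance. Actually we want the reverse-direction control: a bound on $\hat\mu$ in terms of $\mu$. So I would instead invoke the companion one-sided bound $\hat\mu \leq \mu + \sqrt{2\hat V_n\ln(2/\delta)/n} + 7\ln(2/\delta)/(3(n-1))$, and then bound the empirical variance $\hat V_n$ in terms of $\hat\mu$. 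Since the $Y_i$ take values in $[0,1]$, we have $\hat V_n \le \hat\mu(1-\hat\mu) \cdot \frac{n}{n-1} \le \hat\mu \cdot \frac{n}{n-1}$ (or, more simply, $\hat V_n \le \hat\mu$ up to the $n/(n-1)$ factor, since $\frac1n\sum Y_i^2 \le \frac1n\sum Y_i = \hat\mu$ as $Y_i^2 \le Y_i$ on $[0,1]$). Plugging this in gives a self-bounding inequality of the form $\hat\mu \lesssim \mu + \sqrt{\hat\mu \ln(2/\delta)/n} + \ln(2/\delta)/(n-1)$.

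Next I would solve this quadratic-type inequality in $\sqrt{\hat\mu}$. Writing $a = \sqrt{\hat\mu}$, $L = \ln(2/\delta)$, the inequality reads roughly $a^2 \le \mu + c_1 a\sqrt{L/n} + c_2 L/(n-1)$ for absolute constants $c_1,c_2$. Completing the square / using the standard fact that $a^2 \le b a + c$ implies $a^2 \le 2b^2 + 2c$ (hence $\hat\mu \le O(L/n) + 2\mu + O(L/(n-1))$), one gets $\hat\mu \le C_1 \mu + C_2 \ln(2/\delta)/(n-1)$ for absolute constants. Finally I would do the bookkeeping on the constants to see that they can be taken so that the right-hand side is at most $\max(16\ln(2/\delta)/(n-1),\, 2\mu)$: split into the two cases according to whether the $\mu$ term or the $\ln(2/\delta)/(n-1)$ term dominates, and check the arithmetic works out with the claimed numbers (the slack between the "generic constant" form and the clean $\max(16\cdot,2\cdot)$ form is what the constant $16$ and the factor $2$ absorb).

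The main obstacle is purely the constant-chasing: getting from the raw Maurer–Pontil inequality to the tidy bound $\hat\mu \le \max(16\ln(2/\delta)/(n-1), 2\mu)$ requires being a little careful about which one-sided version of their inequality to cite, about the $n$ versus $n-1$ in the variance bound, and about how the three error terms combine when completing the square — none of it is deep, but it is where a sloppy step would break the stated constants. Everything else (the variance-to-mean bound via $Y_i^2\le Y_i$, the self-bounding resolution, the final case split) is routine.
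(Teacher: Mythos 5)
Your setup is exactly the paper's: you invoke the Maurer--Pontil empirical Bernstein inequality in the direction $\hat\mu - \mu \leq \sqrt{2\hat V_n\ln(2/\delta)/n} + 7\ln(2/\delta)/(3(n-1))$ and bound the sample variance by $\hat V_n \leq \frac{n}{n-1}\hat\mu$ via $Y_i^2 \leq Y_i$, arriving at the same self-bounding inequality $\hat\mu - \mu \leq \frac{7\ln(2/\delta)}{3(n-1)} + \sqrt{2\hat\mu\ln(2/\delta)/(n-1)}$. The gap is in your endgame. Resolving this via the generic step ``$a^2 \leq ba + c$ implies $a^2 \leq 2b^2 + 2c$'' (with $a=\sqrt{\hat\mu}$ and $\mu$ sitting inside $c$) unavoidably produces a coefficient of $2$ on $\mu$, giving $\hat\mu \leq 2\mu + \tfrac{20}{3}\ln(2/\delta)/(n-1)$; converting that \emph{sum} to a \emph{max} by splitting on which term dominates then doubles both coefficients, yielding $\max\bigl(4\mu,\ \tfrac{40}{3}\ln(2/\delta)/(n-1)\bigr)$. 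That is strictly weaker than the claimed $\max\bigl(16\ln(2/\delta)/(n-1),\ 2\mu\bigr)$ in the $\mu$-coefficient, and the factor $2$ (not $4$) is what the downstream argument (Lemma~\ref{lem:minimum_q}, which needs ``$\hat B$ large $\Rightarrow B \geq \hat B/2$'') actually uses. There is also no slack available to absorb the loss: the constant $16$ is essentially tight.

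The fix, which is what the paper does, is to reverse the order of operations: case-split on $\hat\mu$ \emph{before} touching the quadratic. Assume $\hat\mu = a\ln(2/\delta)/(n-1)$ with $a \geq 16$ (otherwise the first branch of the max holds trivially), and bound each deviation term as a fraction of $\hat\mu$ itself: the right-hand side becomes $(7/3 + \sqrt{2a})\ln(2/\delta)/(n-1)$, and $7/3 + \sqrt{2a} \leq a/2$ for $a \geq 16$ (at $a=16$ this reads $7/3+\sqrt{32} \approx 7.99 \leq 8$). Hence $\hat\mu - \mu \leq \hat\mu/2$, i.e.\ $\hat\mu \leq 2\mu$, with no sum-to-max conversion and no loss on the $\mu$-coefficient.
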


This result is used in the proof of the following lemma. For readability, we denote the sizes of $S_1$ and $S_2$ by $m_1,m_2$ respectively.
  \begin{lemma} \label{lem:minimum_q}
    For every distribution $P$ over $\cX$, if $m_1 \geq \max(k,12)$ then with a probability at least $1-\delta/2$, for every $i \in [k]$, \algname\ selects a point in $\qball_{S_1}(c_i, q)$ from $S_2$.
\end{lemma}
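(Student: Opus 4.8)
The plan is to show that each ball $\qball_{S_1}(c_i,q)$ has, on the distribution $P$, probability mass at least some constant times $q$, and then to use the fact that $S_2$ consists of $m_2$ i.i.d.\ draws to conclude that with high probability each such ball is hit by some point of $S_2$, which triggers the selection rule in step \ref{step:condition}.

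First I would understand the radius of $\qball_{S_1}(c_i,q)$. By definition, $\qpoint_{S_1}(c_i,q)$ is the nearest point $y \in S_1$ with $\hat{B}_{S_1}(c_i,y)\geq q$, so the ball $\qball_{S_1}(c_i,q)$ contains roughly a $q$-fraction of $S_1$ (more precisely, it contains $c_i$, the point $\qpoint_{S_1}(c_i,q)$, and at least a $q$-fraction of the remaining $m_1-2$ points). The key step is to transfer this empirical statement into a statement about the true mass $\P[\qball_{S_1}(c_i,q)]$. Here is where I would invoke \lemref{empbernstein}, but in the contrapositive direction: I want to rule out that $\qball_{S_1}(c_i,q)$ has tiny true mass while still capturing a $q$-fraction of $S_1$. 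Concretely, for a fixed ball $B$ in $\cX$ with $\P[B]$ small, the empirical fraction of $S_1$ landing in $B$ concentrates below $\max(16\ln(2/\delta')/(m_1-1), 2\P[B])$; so if the empirical fraction is $\geq q$ (roughly), then $\P[B] \geq q/2$ unless $q \lesssim \ln(2/\delta')/m_1$, which is excluded by the choice $q = 43\ln(2m^2/\delta)/m$. The subtlety is that $\qball_{S_1}(c_i,q)$ is not a fixed ball — it is data-dependent. I would handle this with a union bound over the $O(m_1^2)$ candidate balls of the form $\ball(x,\rho(x,y))$ for $x,y\in S_1$ (or over the $O(m_1)$ relevant balls centered at each $c_i$), setting $\delta'$ proportionally smaller; the logarithmic slack in $q$ is designed exactly to absorb this. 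This gives: with probability at least $1-\delta/4$, for every $i\in[k]$, $\P[\qball_{S_1}(c_i,q)] \geq q/2$ (or some comparable constant fraction of $q$).

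Having established that each of the $k$ target balls has true mass $\Omega(q)$, I would then condition on $S_1$ and bound the failure probability over the draw of $S_2$. For a fixed $i$, the probability that no point of $S_2$ lands in $\qball_{S_1}(c_i,q)$ is at most $(1-q/2)^{m_2} \leq \exp(-q m_2/2)$. Plugging in $m_2 = m/2$ and $q = 43\ln(2m^2/\delta)/m$ gives $\exp(-\Theta(\ln(m^2/\delta))) \leq \delta/(4k)$ once the constants are checked (the factor $43$ and the condition $m_1\geq 12$ give room for this). A union bound over $i\in[k]$ yields failure probability at most $\delta/4$ for this stage. Combining the two stages with a union bound gives the claimed $1-\delta/2$. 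One must also check that when such a point $x_j$ from $S_2$ lands in $\qball_{S_1}(c_i,q)$, the selection rule actually fires: the first time this happens for a given $i$ the condition in step \ref{step:condition} holds (the ball is still empty of $T_{\out}$), so $x_j$ is added — this is immediate from the algorithm.

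The main obstacle I expect is the data-dependence of $\qball_{S_1}(c_i,q)$: making the union bound over candidate balls tight enough that the constant $43$ in $q$ suffices, and simultaneously being careful about the $\pm 2$ off-by-constant issues coming from the definition $\hat{B}_{S_1}$ excluding $\{x,y\}$ and from the distinction between "$\geq q$ fraction of $S_1$" and "$\geq q/2$ true mass." Everything else is a routine Chernoff/union-bound calculation; the careful bookkeeping of which $\delta$-fraction goes to which stage is the only place where one can easily lose a constant.
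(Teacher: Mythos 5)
Your proposal follows essentially the same route as the paper's proof: use the empirical Bernstein bound (\lemref{empbernstein}) with a union bound over the $O(m_1^2)$ data-dependent candidate balls $\ball(x,\rho(x,y))$, $x,y\in S_1$, to conclude that each $\qball_{S_1}(c_i,q)$ has true mass at least $q/2$ with probability $1-\delta/4$, then condition on $S_1$ and bound the probability that $S_2$ misses any of the $k$ balls by $k(1-q/2)^{m_2}\leq\delta/4$. The bookkeeping you flag (the $\pm 2$ offsets in $\hat{B}_{S_1}$ and the choice of the constant $43$) is exactly what the paper's proof spends its lines on, so the proposal is correct and matches the paper's argument.
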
 

\begin{proof}
  For $x,y \in S_1$, denote $\hat{B} := \hat{B}_{S_1}(x,y)$. Apply \lemref{empbernstein} by letting $Y_1,\ldots,Y_n$ stand for the indicators $\one[z \in B(x,y)]$ for $z \in S_1 \setminus \{x,y\}$, $n = m_1-2$, $\hat{\mu} = \hat{B}, \mu = B(x,y)$. It follows that with a probability at least $1-\delta$,
  if $\hat{B} \geq 16\ln(\frac{2}{\delta})/(m_1-3)$, then $B(x,y) \geq \hat{B}/2$, hence $B(x,y) \geq  8\ln(\frac{2}{\delta})/(m_1-3)$. 
  By a union bound on the pairs in $S_1$, we have that with a probability of $1-\delta/4$, for all pairs $x,y \in S_1$, 
\begin{align*}
\hat{B}_{S_1}(x,y) &\geq 16\ln(\frac{8m_1^2}{\delta})/(m_1-3) \\ \Longrightarrow \quad
    B(x,y) &\geq \hat{B}_{S_1}(x,y)/2.
\end{align*}
  In particular, this holds for $x = c_i$ and $y = y_i := \qpoint_{S_1}(c_i,q)$,
  where $c_1,\ldots,c_k$ are the centers returned by $\cA$ in \algname.
  Denote $\hat{B}_i = \hat{B}_{S_1}(c_i,y_i)$. By definition of $y_i$, for all $i \in [k]$, $\hat{B}_i \geq q$. In addition, by definition of $B(\cdot,\cdot)$ and $\qball$, we have that $\P[\qball_{S_1}(c_i,q)] = B(c_i,y_i)$.
  Since $m_1 \geq 12$, we have $m_1 -3 \geq 3m_1/4$. Therefore,
$\hat{B}_i \geq q = 43\ln(2m^2/\delta)/m \geq 16 \ln(8m_1^2/\delta)/(m_1 - 3)$. 
Therefore, with a probability at least $1-\delta/4$, $S_1$ satisfies that for all $i \in [k], \P[\qball_{S_1}(c_i,q)] \geq q/2 \geq \frac{\ln(\frac{4 k}{ \delta})}{m_2} =: \eta$, 
    where we used $m_1 \geq k$. 
      If this event holds for $S_1$, then the probability over $S_2 \sim P^{m_2}$ that $S_2 \cap \qball_{S_1}(c_i,q) = \emptyset$ is at most $(1-\eta)^{m_2}\leq \exp(-m_2\eta).$ By a union bound, the probability that for some $c_i$ a center is not found in $S_2$ is at most $k\exp(-m_2\eta) \leq \delta/4$. Combining the two events, we conclude that the probability that a point is found in $S_2$ for all centers is at least $1-\delta/2$.
  \end{proof}

  We now bound the risk of the output of \algname, under the assumption that indeed all centers have been successfully selected.
  The condition in step~\ref{step:condition} of the algorithm guarantees that all the selected centers are  in the $\qball$ around the centers returned by $\cA$. The following two lemmas bound the risk that the selected centers induce compared to the original centers. The lemmas are formulated more generally to apply to a general distribution. The first lemma considers a single center. For a distribution $Q$ over $\cX$ and $c,t \in \cX$, denote $B^o_Q(c,t) := \P_{X \sim Q}[\rho(X,c) < \rho(t,c)]$.

\begin{lemma}  \label{lem:quantilemarkov}
  Let $\tau \in (0,1)$. Let $Q$ be a distribution over $\cX$. Let $c\in \cX$,  $t \in \cX$ such that $B^o_Q(c,t) \leq \tau$. Then
$R(Q,\{t\}) \leq (1+1/(1-\tau))R(Q,\{c\}).$
\end{lemma}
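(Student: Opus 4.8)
The statement compares the risk of a single center $t$ to that of $c$, under the hypothesis that only a small mass ($\leq \tau$) of the distribution $Q$ lies strictly inside the ball around $c$ of radius $\rho(c,t)$. The natural route has two ingredients: a triangle-inequality step that costs an additive $\rho(c,t)$, and a Markov-type step that controls $\rho(c,t)$ by $R(Q,\{c\})$ using the mass hypothesis.

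First I would bound $R(Q,\{t\})$ in terms of $R(Q,\{c\})$ and $\rho(c,t)$: since $\rho(X,t) \leq \rho(X,c) + \rho(c,t)$ for every $X$, taking expectations over $X \sim Q$ gives $R(Q,\{t\}) \leq R(Q,\{c\}) + \rho(c,t)$. Next I would bound $\rho(c,t)$ itself. By hypothesis, $\P_{X\sim Q}[\rho(X,c) < \rho(c,t)] = B^o_Q(c,t) \leq \tau$, so the complementary event $A := \{\rho(X,c) \geq \rho(c,t)\}$ has $\P[A] \geq 1-\tau$. Then
\begin{align*}
R(Q,\{c\}) = \E[\rho(X,c)] \geq \E[\rho(X,c)\,\one[A]] \geq \rho(c,t)\cdot \P[A] \geq (1-\tau)\,\rho(c,t),
\end{align*}
hence $\rho(c,t) \leq R(Q,\{c\})/(1-\tau)$.

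Combining the two displays yields $R(Q,\{t\}) \leq R(Q,\{c\}) + R(Q,\{c\})/(1-\tau) = (1 + 1/(1-\tau))\,R(Q,\{c\})$, which is exactly the claim. There is no real obstacle here: the only point requiring a line of care is the symmetry $\rho(t,c) = \rho(c,t)$ when matching the definition of $B^o_Q(c,t)$ to the event $A$, and the observation that $\one[A]$ excludes precisely the set of mass at most $\tau$ on which $\rho(X,c)$ could be small. The argument uses nothing beyond the triangle inequality and a first-moment (Markov) bound, so it goes through verbatim for any distribution $Q$, as the lemma is stated.
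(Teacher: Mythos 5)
Your proof is correct and follows essentially the same argument as the paper's: the triangle inequality gives $R(Q,\{t\}) \leq R(Q,\{c\}) + \rho(c,t)$, and the first-moment bound on the event $\{\rho(X,c)\geq\rho(c,t)\}$, which has mass at least $1-\tau$, gives $\rho(c,t) \leq R(Q,\{c\})/(1-\tau)$. Nothing further is needed.
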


\begin{proof}
Denote $r := \rho(t,c)$. Using the triangle inequality, and letting $X \sim Q$, we have 
\begin{align*}
  R(Q,\{t\}) &=\E[\rho(X,t)]\leq \E[\rho(X,c)+\rho(t,c)] \\
  &= R(Q,\{c\})+r.
\end{align*}
To upper-bound $r$, note that by the conditions on $t$, $\P[ \rho(X,c) \geq r] \geq 1-\tau$. Therefore,
  $R(Q,\{c\}) \geq  r \cdot \P[ \rho(X,c) \geq r]  \geq (1-\tau)r.$
It follows that  $r  \leq  R(Q,\{c\})/(1-\tau)$, which completes the proof.
\end{proof} 
The lemma above provides a multiplicative upper bound on the risk obtained when replacing a center $c_i$ with another center $t_i$. However, this upper bound is only useful if $\tau$ is small. In the general case, an additive error term cannot be avoided. For instance, suppose that the optimal clustering has a risk of zero, and there is at least one very small cluster. In this case, the algorithm might not succeed in choosing a good center for this cluster, and some additive error will ensue. The following lemma bounds the overall risk of the clustering when all centers are replaced.

\begin{lemma}\label{lem:concentrate}
   Let $\tau \in (0,1)$ and let $Q$ be a distribution over $\cX$. Let $O = \{c_1,\ldots,c_k\}\subseteq \cX$, and $T = \{t_1,\ldots,t_k\} \subseteq \cX$ such that $B^o_Q(c_i,t_i) \leq \tau$. Then for any $\gamma \in (0,\half)$,
\[
R(Q,T) \leq (2+2\gamma)R(Q,O) + k\tau/\gamma.
\] 
\end{lemma}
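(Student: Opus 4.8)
The plan is to partition the probability mass of $Q$ according to which original center $c_i$ is the closest among $O$, and then handle each piece with \lemref{quantilemarkov}. Concretely, for each $i \in [k]$, let $A_i \subseteq \cX$ be the region where $c_i$ is the nearest center in $O$ (breaking ties arbitrarily so the $A_i$ partition $\cX$), and let $p_i := \P_{X\sim Q}[X \in A_i]$. On $A_i$ the contribution to $R(Q,O)$ is exactly $\E[\rho(X,c_i)\one[X\in A_i]]$, and the contribution to $R(Q,T)$ is at most $\E[\rho(X,t_i)\one[X\in A_i]] \le \E[(\rho(X,c_i)+\rho(t_i,c_i))\one[X\in A_i]]$. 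So $R(Q,T) \le R(Q,O) + \sum_i p_i\, r_i$ where $r_i := \rho(t_i,c_i)$, and the whole task reduces to bounding $\sum_i p_i r_i$.

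To bound $r_i$ I would reuse the argument inside \lemref{quantilemarkov}, but applied to the \emph{conditional} distribution $Q_i := Q(\cdot \mid A_i)$: since $B^o_Q(c_i,t_i)\le\tau$ is a statement about unconditional mass, we get $\P_{X\sim Q_i}[\rho(X,c_i)\ge r_i] \ge 1 - \tau/p_i$ (at most $\tau$ mass total lies strictly inside radius $r_i$, so at most $\tau/p_i$ conditional mass does). Hence, whenever $p_i > \tau$, Markov's inequality gives $\E_{X\sim Q_i}[\rho(X,c_i)] \ge r_i(1-\tau/p_i)$, i.e. $p_i r_i \le \E[\rho(X,c_i)\one[X\in A_i]]/(1-\tau/p_i)$. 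The right-hand side is a contribution to $R(Q,O)$ scaled by a factor that is only close to $1$ when $p_i$ is much larger than $\tau$; when $p_i$ is comparable to $\tau$ this blows up, which is exactly the source of the additive term.

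The key trick is therefore a case split on the size of $p_i$. For the ``large'' clusters, those with $p_i \ge \tau/\gamma$ say (so that $1-\tau/p_i \ge 1-\gamma \ge 1/(1+2\gamma)$ for $\gamma<\tfrac12$), we get $p_i r_i \le (1+2\gamma)\E[\rho(X,c_i)\one[X\in A_i]]$. For the ``small'' clusters, those with $p_i < \tau/\gamma$, we bound $r_i \le 1$ using $\rho\le 1$, so $p_i r_i \le p_i < \tau/\gamma$, and there are at most $k$ such terms, contributing at most $k\tau/\gamma$. Summing over all $i$: $\sum_i p_i r_i \le (1+2\gamma)\sum_i \E[\rho(X,c_i)\one[X\in A_i]] + k\tau/\gamma = (1+2\gamma)R(Q,O) + k\tau/\gamma$. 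Plugging into $R(Q,T) \le R(Q,O) + \sum_i p_i r_i$ yields $R(Q,T) \le (2+2\gamma)R(Q,O) + k\tau/\gamma$, as claimed.

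The main obstacle is getting the conditional-mass bookkeeping exactly right — in particular the inequality $\P_{X\sim Q_i}[\rho(X,c_i) < r_i] \le \tau/p_i$ and making sure the tie-breaking in the definition of $A_i$ is consistent with the ``$<$'' versus ``$\le$'' in $B^o_Q$ and in the nearest-center assignment, so that no mass is double-counted or lost. Everything else is the triangle inequality plus the Markov step already carried out in \lemref{quantilemarkov}, and a clean threshold choice ($p_i \gtrless \tau/\gamma$) to separate the multiplicative and additive regimes.
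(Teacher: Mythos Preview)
Your proof is correct and follows essentially the same route as the paper: partition into the Voronoi cells $A_i$ of the $c_i$, case-split on whether the cell has large mass (apply the Markov/quantile step from \lemref{quantilemarkov} on the conditional distribution) or small mass (bound trivially via $\rho\le 1$), and sum. The only cosmetic difference is that you threshold directly on $p_i \gtrless \tau/\gamma$, whereas the paper thresholds on $q_i := B^o_Q(c_i,t_i)/p_i \gtrless \gamma$; both lead to the same bound.
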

\begin{proof}
  Let $C_i := \{ x \in \cX \mid i = \argmin_{j \in [k]} \rho(c_j,x)\}$ and $\beta_i := \P_{X \sim Q}[X \in C_i]$.
  Let $q_i :=  B^o_Q(c_i,t_i)/\beta_i$, and let $Q_i$ be the conditional distribution of $X \sim Q$ given $X \in C_i$. 
   Distinguish between two types of clusters. If $q_i \geq \gamma$, then $\gamma \leq q_i \leq \tau/\beta_i$, where the second inequality follows from the assumption on $t_i$. Thus $\beta_i \leq \tau/\gamma$. Since $\rho \leq 1$, $R(Q_i,\{t_i\}) \leq 1$.  Therefore,  \mbox{$\sum_{i: q_i \geq \gamma}\beta_i \cdot R(Q_i, \{t_i\}) \leq k\tau/\gamma$.}
    On the other hand, if $q_i < \gamma$, then
    \begin{align*}
      B_{Q_i}^o(c_i,t_i) &= \P_{X \sim Q}[\rho(X,c_i) < \rho(t_i,c_i) \mid X \in C_i] \\ 
      &\leq B_{Q}^o(c_i,t_i)/\beta_i = q_i < \gamma.    
    \end{align*}
      Thus, \lemref{quantilemarkov} holds for $\tau := \gamma$, $Q:= Q_i$,  $t := t_i$ and $c := c_i$, hence $R(Q_i,\{t_i\}) \leq (1+\frac{1}{1-\gamma})R(Q_i,\{c_i\}).$ Since $\gamma \in (0,\half)$, we have $1+\frac{1}{1-\gamma} \leq 2 + 2\gamma$.
      Therefore,
      \begin{align*}
        \sum_{i:q_i < \gamma} \beta_i \cdot R(Q_i,\{t_i\}) &\leq (2+2\gamma)\sum_{i:q_i < \gamma} \beta_i \cdot R(Q_i,\{c_i\})\\
        &\leq (2+2\gamma)\cdot R(Q,O).
      \end{align*}
We thus have
  \begin{align*}
    R(Q,T) &\leq \sum_{i \in [k]} \beta_i \cdot R(Q_i,\{t_i\}) \\&= \sum_{i:q_i < \gamma} \beta_i \cdot R(Q_i,\{t_i\}) + \sum_{i:q_i \geq \gamma} \beta_i \cdot R(Q_i,\{t_i\}) \\
    &\leq (2+2\gamma)\cdot R(Q,O) + k\tau/\gamma,
  \end{align*}
  which completes the proof.
\end{proof}

Using the results above, \thmref{mtheorem} can now be proved.

\begin{proof}[Proof of \thmref{mtheorem}.]
  Recall that $S_1,S_2$ are independent i.i.d.~samples of size $m_1,m_2$ drawn from $P$. By Hoeffding's inequality and the fact that $\rho \leq 1$ we have that for any fixed $k$-clustering $T$,
$\P[|R(P,T)-R(S_1,T)|\geq \epsilon] \leq 2e^{-2\epsilon^2 m_1}.$ 
By a union bound on all the $k$-clusterings in $S_2$ and on $T=\OPT$, we get that with a probability $1-\delta/2$, all such clusterings $T$ satisfy
\begin{align}
  &|R(P,T)-R(S_1,T)| \leq \sqrt{(k\ln(m_2) + \ln(4/\delta))/(2m_1)} \notag \\
  &\qquad= \sqrt{(k\ln(m/2) + \ln(4/\delta))/m} =: \epsilon_1\ \label{eq:epsilon2},
\end{align}
where we used $m_1 = m_2 = m/2$. 

In addition, by \lemref{minimum_q}, with a probability at least $1-\delta/2$, \algname\ selects $k$ centers from $S_2$. The two events thus hold simultaneously with a probability at least $1-\delta$. Condition below on these events and let $t_1,\ldots,t_k$ be the selected centers, ordered so that $t_i \in \qball_{S_1}(c_i,q)$. Denote $N_i = | \{ z \in S_1 \mid \rho(c_i,z) < \rho(c_i,t_i)\}|$. 
Since $t_i \in \qball_{S_1}(c_i, q)$, we have by definition of $\qball$ that
$N_i/|S_i| \leq ((m_1-2)q+1)/m_1 \leq q + 1/m_1$.
Therefore, \lemref{concentrate} holds with $Q$ set to the uniform distribution on $S_1$, $O := \cA(S_1)$, and $\tau := q + 1/m_1$. Hence,
\[
  R(S_1,T_{\out}) \leq (2+2\gamma)R(S_1,\mathcal{A}(S_1)) + k(q + 1/m_1)/\gamma.
  \]
  By the assumptions on $\cA$ and by \eqref{epsilon2},
  \begin{align*}
    R(S_1,\cA(S_1)) &\leq \beta R(S_1,\OPT_{S_1}) \leq \beta R(S_1,\OPT) \\
    &\leq \beta (R(P,\OPT) + \epsilon_1). 
  \end{align*}
  In addition, $R(P,T_{\out}) \leq R(S_1,T_{\out})+\epsilon_1.$ Combining the inequalities and noting that $m_1 = m/2$, we get 
\begin{align*}
  R(P,T_{\out}) &\leq (2+2\gamma)\beta (R(P,\OPT) + \epsilon_1)\\
 &\qquad+ k(q + 2/m)/\gamma + \epsilon_1.
\end{align*}
  The theorem follows by setting $q$ as in \algname. 
\end{proof}

We have thus shown that \algname\ obtains an approximation factor at most twice that of the offline algorithm. In the next section, we show that this upper bound on the multiplicative factor is tight.


\subsection{Tightness of the multiplicative factor } \label{sec:lowerbound}

In this section we show that the multiplicative approximation factor of $2\beta$ given in  \eqref{mtheorem} is tight for \algname.

Let $\mathcal{A}$ be an offline $k$-median algorithm, which for every sample $S$ returns a $k$-clustering $T \subseteq S$ that minimizes $R(S,T)$.
As discussed in \secref{preliminaries}, $\mathcal{A}$ is a  $2$-approximation offline $k$-median algorithm. Thus, $\beta = 2$ in \eqref{mtheorem}.
We now show that \algname\ in this case cannot have a multiplicative factor of less than $4$, thus showing that the approximation factor is tight. Moreover, this holds for any setting of $q$, not necessarily the one used in Alg.~\ref{SKM}.
Note that if the probability mass of the $q$-ball set by \algname\ is smaller than $\log(1/\delta)/m$, then the probability of finding a center in the second phase is less than $1-\delta$. Therefore, one must have $q \geq \log(1/\delta)/m$. In addition, one must have $q \equiv q(m) \rightarrow 0$ when $m \rightarrow \infty$, otherwise the additive error would not vanish for large $m$.\footnote{To see this, consider a case with a very small optimal risk, in which one of the clusters has a probability mass of $q/2$. With a constant probability, the center for this cluster will be selected from another cluster, resulting in an additive error of $\Omega(q)$.} The following theorem shows that for any $q$ which satisfies these requirements, the approximation factor of \algname\ is at least $4$. The proof of the theorem is provided in the supplementary material.

\begin{theorem} 
  \label{thm:2example}
  Consider running \algname\ with any setting of $q=q(m)$  such that $q(m) \rightarrow 0$ when $m \rightarrow \infty$, and $q(m)\geq \log(1/\delta)/m$ for all $m$. Then, the multiplicative factor  of \algname\ cannot be smaller than $4 = 2\beta$ for $\cA$ as defined above.
\end{theorem}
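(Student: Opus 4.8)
The plan is to construct an explicit distribution $P$ on a carefully chosen finite metric space on which the offline algorithm $\cA$ achieves approximation ratio essentially $\beta=2$, but \algname\ is forced, with high probability, to pick centers that are a factor of roughly $2$ worse than the centers $\cA$ returns, yielding an overall factor approaching $4$. The natural construction is a variant of the star-graph example already invoked in \secref{preliminaries} for the tightness of the ``$\le 2R(S,\OPT_S)$'' bound, combined with a second gadget that makes the $\qball$-based selection rule select a bad representative.

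\textbf{Construction.} I would take $\cX$ to consist of: a ``hub'' point $h$ (which lies in $\cX\setminus S$ with probability $1$ — e.g.\ give $h$ mass $0$, or more precisely take $h$ to be a limit point not in the support), a small number of ``satellite'' points near $h$, and for each satellite a cluster of points at distance roughly $1/2$ from it arranged so that the \emph{empirical median} of the satellite's cluster (the point minimizing risk within $S$) is close to the satellite, but the point that \algname's $\qball$ rule picks is on the far side of the cluster, at distance $\approx 2\times$ the radius. Concretely: for $k=1$ it already suffices to exhibit a two-level star. Let $P$ put mass $1-2\epsilon$ (for tiny $\epsilon$) on a tight cluster of points at distance $r$ from the hub $h$ on ``one side'', and mass $2\epsilon$ spread so that the true/empirical optimal single center sits near $h$ with risk $\approx r$, while $\qpoint_{S_1}(c_1,q)$ — because $q\to 0$ forces the $q$-ball radius to shrink to the nearest $\approx qm$ fraction, but $q(m)m\ge\log(1/\delta)\to\infty$ so it cannot shrink below the bulk — lands on a point whose own risk is $\approx 2r$. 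Because $c_1=\cA(S_1)$ is (nearly) the empirical median, and the selected center $t_1$ satisfies $\rho(t_1,c_1)\approx r$ on the far side, we get $R(P,\{t_1\})\approx 2r\approx 2\beta R(P,\OPT)/\beta \cdot \beta$... — i.e.\ a factor $4$ relative to $R(P,\OPT)\approx r/2$. The role of the two conditions on $q$ is exactly this: $q(m)\to 0$ rules out the $\qball$ being so large that $t_1$ is forced near $h$ (this would kill the additive-error-free claim, as the footnote notes), while $q(m)m\ge\log(1/\delta)$ guarantees the $\qball$ still contains $\Omega(\log(1/\delta))$ expected mass, so $\qpoint$ is pinned to a location determined by the \emph{shape} of the cluster, not by finite-sample flukes, and we can make that location the ``bad'' antipodal point.

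\textbf{Key steps, in order.} (i) Fix the finite metric space and $P$ with a parameter $\epsilon=\epsilon(m)\to 0$ chosen to dominate the additive terms. (ii) Show $R(P,\OPT)$ and $R(S,\OPT_S)$ are both $\approx r/2$ (up to $o(1)$), so $\beta=2$ is the offline factor here. (iii) Show that with probability $\ge 1-\delta$, $\cA(S_1)$ returns a center $c_i$ within $o(1)$ of the intended hub-side median, using uniform convergence / concentration of the empirical median on $S_1$. (iv) Identify $\qball_{S_1}(c_i,q)$: using $q(m)\to 0$ and $q(m)m\to\infty$, argue its radius concentrates around the value $\rho(c_i,\cdot)$ at the ``$q$-quantile'' of distances from $c_i$, which by construction is the far point of the cluster at distance $\approx 2r$ from $c_i$ along the ray away from $h$. (v) Conclude that every point \algname\ can select for center $i$ lies in that $\qball$, hence at distance $\approx r$ from $h$ on the far side (distance $\approx 3r/2$ from the cluster bulk? — here I need the geometry set so the selected $t_i$ has $R(P,\{t_i\})\approx 2r$), giving $R(P,T_{\out})\ge (4-o(1))R(P,\OPT)$. (vi) Take $m\to\infty$ to drive all $o(1)$ terms to zero, establishing the factor $4=2\beta$.

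\textbf{Main obstacle.} The delicate point is step (iv)–(v): engineering the metric so that the $\qball$ of the near-optimal center, for \emph{every} admissible $q$, necessarily contains only points that are a factor $\approx 2$ worse, while simultaneously keeping the offline optimum at ratio exactly $2$ (not better). This requires the cluster around each satellite to be geometrically ``one-sided'' relative to $c_i$ — all its mass at distance exactly $r$ from $h$ but at distances ranging in, say, $[\,r-\delta', 2r\,]$ from $c_i$, with the $q$-fraction-smallest distances still near the $2r$ end because almost all mass is concentrated near a single far value; and I must verify that $\qpoint$ is robust to which of the (finitely many, i.i.d.)\ sample points realize that distance. Handling the $k>1$ case is then a routine product/disjoint-copies argument but needs care that $\cA$, minimizing empirical risk globally, still assigns one returned center per gadget. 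A secondary nuisance is making ``$h\notin S$ almost surely'' rigorous while $P$ is a genuine probability distribution — cleanest is to let $h$ be an isolated point of mass $0$ that is nonetheless in $\cX$, so $\OPT$ may use it but $\cA$ (restricted to $S$) and \algname\ cannot.
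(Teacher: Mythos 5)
Your strategy is the right one and matches the paper's: compound two independent factors of $2$ --- one from forcing $\OPT$ to use a zero-mass ``hub'' that lies in $\cX$ but almost surely not in $S$ (the star-graph tightness example), and one from a gadget of mass $\Theta(q)$ that hijacks the $\qball$ selection rule --- and you correctly identify why both hypotheses on $q(m)$ are needed and how to make ``the hub is not in $S$'' rigorous. However, there is a genuine gap exactly where you flag your ``main obstacle'': the gadget is never actually constructed, and the quantitative sketch you do give is internally inconsistent (you place mass $1-2\epsilon$ on a ``tight cluster'' yet want the optimal center to sit at the hub with risk $\approx r$; you assert $R(P,\OPT)\approx r/2$ a few lines after asserting it is $\approx r$; and the displayed factor-$4$ computation is garbled). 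The difficulty is real: you need gadget points that are \emph{closer to the empirical median than the bulk is} (so that they fill the $q$-ball) and yet \emph{twice as far from the bulk as the empirical median is} (so that selecting one doubles the risk a second time). This cannot be arranged with gadget points ``at distance exactly $r$ from $h$'' as you describe, and it is the whole content of the proof.

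The paper resolves it with a graph metric shaped like a path of hubs: the bulk $U$ (mass $\approx 1$) hangs off the hub $o$ (mass $0$) at distance $1$; a single satellite $v$ (mass $1/m_1$, so it appears in $S_1$ with constant probability) is adjacent to $o$ at distance $1$; and the gadget $Y$ (mass exactly $2q$) hangs off $v$ on the far side at distance $2-\eta$. Then $v$ is the best in-sample center, with $R(P,\{v\})\to 2$ while $R(P,\{o\})\to 1$ (first factor of $2$); the $Y$ points are at distance $2-\eta<2$ from $v$, so with constant probability $\qball_{S_1}(v,q)=\{v\}\cup Y$, and the first such point observed in phase two is from $Y$ with probability at least $2/3$; a point of $Y$ is at distance $4-\eta$ from the bulk, so its risk tends to $4$ (second factor of $2$). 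Note also that the $\qball$ necessarily contains $v$ itself, so your step (v) claim that \emph{every} selectable point is bad is too strong; the conclusion only holds with constant probability, which is all one needs for the lower bound. Until you exhibit a concrete metric with these properties and verify the risk computations, the argument is a plan rather than a proof.
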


We conclude that the multiplicative factor of $2\beta$ for \algname\ is tight. \algname\ uses a black-box algorithm $\cA$, and it is computationally efficient if $\cA$ is computationally efficient. In the next section, we show that if efficiency limitations are removed, there is an algorithm for the no-substitution setting that obtains the same approximation factor as an optimal (possibly also inefficient) offline algorithm.

\section{Obtaining the Optimal Approximation Factor: \optalg} \label{sec:sectionSKM2}

If efficiency considerations are ignored, the offline algorithm can use a $\beta$-approximation algorithm with the best possible $\beta$, which is equal to $2$, as discussed above. Using \eqref{bendavid}, this gives the following guarantee for the offline algorithm:
\begin{align*}
  &R(P,\argmin_{T \in S^k}R(S,T)) \\
  &\quad\leq 2R(S, \OPT_S) + O\left(\sqrt{(k\log(m) +\log(1/\delta))/m}\right).
\end{align*}

We now give an algorithm for the no-substitution setting, which obtains the same approximation factor of $2$, and a similar additive error to that of the offline algorithm.
The algorithm, called \optalg, is listed in \algref{optalg}. It receives as input the confidence parameter $\delta$, the number of clusters $k$, and the sequence size $m$. Similarly to \algname, it also works in two phases, where the first phase is used for estimation, and the second phase is used for selecting centers. The first phase is further split to sub-sequences $S_0,S_1,\ldots,S_k$. The second phase is denoted $\bar{S}$.

The main challenge in designing \optalg\ is to make sure that elements are selected as centers only if it will later be possible, with a high probability, to select additional centers so that the final risk will be near-optimal. To this end, we define a recursive notion of \emph{goodness}. For a set of size $k$, we say that it is good if its risk on $S_0$ is lower than some threshold. For a set of size less than $k$, it is good if there is a sufficient probability to find another element to add to this set, such that the augmented set is good. The following definition formalizes this.

  \newcommand{\estgood}{\text{good}}

  \begin{definition}
    Let $Z \subseteq \cX$ of size at most $k$. Let $\rsk > 0$ and $q \in (0,1)$. The predicate $(\rsk,q)$-\emph{\estgood} is defined as follows, with respect to the sub-samples $S_0,S_1,\ldots,S_k \subseteq \cX$.
  \begin{itemize}
    \item For $Z$ of size $k$, $Z$ is $(\rsk,q)$-\estgood\ (or simply $\rsk$-\estgood) if $R(S_0,Z) \leq \rsk$.  
    \item For $Z$ of size $j \in \{0,\ldots,k-1\}$, define $\hat{\psi}_{\rsk,q}(Z) := \P_{X \sim S_{j+1}}[Z\cup\{X\} \text{ is $(\rsk,q)$-\estgood}]$. $Z$ is $(\rsk,q)$-\estgood\ if $\hat{\psi}_{\rsk,q}(Z) \geq 2q$.

  \end{itemize}
\end{definition}

  \optalg\ sets the value of $q$ depending on the input parameters, and finds a value for $\rsk$ such that  $\emptyset$ is $(\rsk,q)$-good. It then iteratively gets the examples, and adds the observed example as a center if the addition preserves the goodness of the solution collected so far.  We show below that if $\emptyset$ is $(\rsk,q)$-\estgood\ for $q$ as defined in \algref{optalg}, then with a high probability \optalg\ will succeed in selecting $k$ centers with a risk at most $\rsk$ on $S_0$, and that this will result in a near-optimal $k$-clustering. 
 \optalg\ has a computational complexity exponential in $k$, since it considers recursively all the elements of $S_1 \times  S_2 \times  \ldots \times  S_k$ .
We prove the following result for \optalg.

\begin{algorithm}[t]
\caption{\optalg}
\begin{algorithmic}[1]\label{alg:optalg}
\REQUIRE  $k, m\in \nats$, $\delta \in (0,1)$, sequential access to $S = (x_1,\ldots,x_m) \sim P^m$.
\ENSURE A $k$-clustering $T_{\out}\subseteq S$
\STATE \label{lineqdefinition}$q \leftarrow (32k^2\log(8m) + 32k\log(8/\delta))/m$; $T_{\out} \leftarrow \emptyset$  
\STATE Get $m/2$ examples from $S$. Set $m/4$ examples as $S_0$, and split the rest of the examples equally between $S_1,\ldots,S_k$. 
\STATE \label{linerR}
Set $\beta_m := 1/\sqrt{m}$. Let $\rsk \leftarrow \min \{ r  = \beta_m(1+\beta_m)^n \mid n \in \nats, \text{ and }\emptyset\text{ is $(\rsk,q)$-\estgood}\}$.
\FOR{$j=m/2+1 \text{ to } m$}
\STATE Get the next sample $x_{j}$.
\STATE \textbf{If} $|T_{\out}| < k$ and $T_{\out} \cup \{x_j\}$ is $(\rsk,q)$-\estgood\ \textbf{then} $T_{\out} \leftarrow T_{\out} \cup \{x_{j}\}$. 
\ENDFOR
\RETURN $T_{\out}$
\end{algorithmic}
\end{algorithm}

\begin{theorem}
  \label{thm:rtheorem}
  Suppose that \optalg\ is run with inputs $k,m \in \nats$ and $\delta \in (0,1)$. For any $\gamma\in (0,\frac{1}{2})$ and distribution $P$ over $\cX$, with a probability at least $1-\delta$,
  \begin{align*}
R(P,T_{\out}) \leq\, &(2+2\gamma)R(P,\OPT) \\
&+\frac{1}{\gamma}\cdot O\left((k^3\log(m)+k^2 \log(1/\delta))/m\right) \\
&+ O\left((k\log(m) + \log(1/\delta))/m\right).
  \end{align*}
\end{theorem}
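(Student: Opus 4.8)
The plan is to mirror the proof of \thmref{mtheorem}, but replacing the "replace each center by a close point" argument with the recursive goodness structure. The high-level decomposition is: (i) show that with high probability the value $\rsk$ selected in line~\ref{linerR} of \algref{optalg} satisfies $\rsk \leq R(S_0,\OPT_{S_0})(1+O(\beta_m)) + O(\beta_m)$, i.e.\ the empirical goodness threshold is not much larger than the optimal empirical risk on $S_0$; (ii) show that conditioned on $\emptyset$ being $(\rsk,q)$-\estgood\ with this $\rsk$, the algorithm succeeds in selecting $k$ centers from $\bar S$ whose risk on $S_0$ is at most $\rsk$, with probability at least $1-\delta/2$; and (iii) translate the empirical bound $R(S_0,T_{\out})\leq \rsk$ into a distribution bound via uniform convergence \eqref{bendavid}. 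I would set $q$ as in line~\ref{lineqdefinition} so that all the probability estimates over $S_1,\ldots,S_k$ concentrate with the necessary union bounds.

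For step (i), the key point is that the true conditional risk random variable $R(S_0,\OPT_{S_0})$ is itself the risk of a $k$-clustering drawn from $S_0\subseteq S$, so the offline centers $\OPT_S$ (or $\OPT_{S_0}$) witness that the "all-$k$" level of goodness is achievable for $\rsk$ slightly above $R(S_0,\OPT_S)$; one then needs to propagate this down the recursion: if $Z$ of size $j$ has the property that a constant fraction of extensions from $S_{j+1}$ lead to good sets, then $Z$ itself is good, and because $S_{j+1}$ is fresh i.i.d.\ data, the empirical fraction $\hat\psi_{\rsk,q}(Z)$ concentrates around its expectation $\psi_{\rsk,q}(Z):=\P_{X\sim P}[Z\cup\{X\}\text{ good}]$ up to additive error $q$, uniformly over the at most $m^j$ candidate prefixes, by Hoeffding plus a union bound — this is exactly why $q$ must carry the $k^2\log(8m)+k\log(8/\delta)$ factor. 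Choosing $\rsk$ on the geometric grid $\beta_m(1+\beta_m)^n$ costs only a $(1+\beta_m)$ multiplicative factor and an additive $\beta_m$, which folds into the last additive term of the theorem.

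For step (ii), I would argue inductively on the remaining budget: if the current partial solution $T_{\out}$ of size $j$ is $(\rsk,q)$-\estgood, then by definition $\hat\psi_{\rsk,q}(T_{\out})\geq 2q$, and by the same concentration as above the true probability $\psi_{\rsk,q}(T_{\out})\geq q$, so in the second phase $\bar S$ (of size $m/2$, with $q\cdot m/2 \gg \log(k/\delta)$) we find, with high probability, some example whose addition keeps the set good; repeating $k$ times and union-bounding over the $k$ rounds gives success probability $\geq 1-\delta/2$. When the set reaches size $k$ it is good by definition, i.e.\ $R(S_0,T_{\out})\leq\rsk$. Here one must be slightly careful that the events "$T_{\out}\cup\{X\}$ is good" depend on $S_1,\ldots,S_k$ which are already fixed, so conditioned on the good event for the first phase, the randomness in $\bar S$ is independent and the geometric-waiting-time argument applies cleanly. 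Finally, step (iii): apply \eqref{bendavid} with the sample $S_0$ of size $m/4$ (the $k$ and $\log$ factors absorb the constant) to get $R(P,T_{\out})\leq R(S_0,T_{\out}) + O(f) \leq \rsk + O(f)$, and bound $\rsk$ using step (i) together with $R(S_0,\OPT_{S_0})\leq R(S_0,\OPT)\leq R(P,\OPT)+O(f)$ — wait, to get the $(2+2\gamma)$ multiplicative factor rather than $1+O(\beta_m)$, the goodness threshold argument must actually use \lemref{concentrate}-style reasoning: I expect $\rsk$ is defined so that a $(2+2\gamma)$-type relaxation is baked in, i.e.\ one shows $\emptyset$ is good already for $\rsk$ of order $(2+2\gamma)R(P,\OPT)+\frac1\gamma\cdot(\text{stuff})$, by exhibiting an explicit good "strategy" built from $\OPT$ and the quantile-ball idea of \lemref{quantilemarkov}.

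The main obstacle I anticipate is step (i): proving that $\emptyset$ is $(\rsk,q)$-\estgood\ for an $\rsk$ that is simultaneously (a) small enough to yield the claimed $(2+2\gamma)R(P,\OPT)$ bound and (b) robust to the recursive definition, which compounds a failure probability and an estimation error $q$ at each of the $k$ levels. The delicate part is the interaction between the multiplicative relaxation factor $(2+2\gamma)$ and the recursion depth $k$: naively each level might multiply the risk, so one needs the recursion to relax the risk \emph{additively} per level (hence the $k^3/m$-type term, not an exponential-in-$k$ blowup) while the $(2+2\gamma)$ factor enters only once, via a single application of the cluster-by-cluster argument of \lemref{concentrate} to the final $k$-clustering rather than at each recursive step. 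Getting the bookkeeping of which errors are multiplicative and which are additive exactly right, and ensuring the $k$-fold union bounds over prefix-sets of size $m^j$ stay inside the chosen $q$, is where the real work lies.
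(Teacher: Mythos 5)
Your plan matches the paper's proof essentially step for step: your (ii) is \lemref{rselectk} (uniform empirical-Bernstein over all prefixes plus a waiting-time argument on $\bar S$), your (i) is \lemref{Rbound}, which — exactly as your self-correction anticipates — exhibits an explicit good strategy by taking quantile balls $\ball(o_i,\alpha_i)$ of mass $\approx 4q$ around the optimal centers, applying \lemref{concentrate} \emph{once} to the final $k$-clustering to get the $(2+2\gamma)R(P,\OPT)+4qk/\gamma$ threshold, and propagating goodness down the recursion by induction, and your (iii) is the Hoeffding event $E_0$ on $S_0$. The only minor misattribution is that the union bound over the $\bar m^{j}$ candidate prefixes (the source of the $k^2\log m$ term in $q$) is needed only in the center-selection step, whereas establishing that $\emptyset$ is $(\rsk_0,q)$-good requires only Bernstein concentration of the $k$ fixed balls within $S_1,\ldots,S_k$.
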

By setting $\gamma = \sqrt{(k^3\log(m)+k^2 \log(1/\delta))/m}$ and noting the $R(P,\OPT) \leq 1$, we get 
\begin{align*}
  &R(P,T_{\out}) \\
                  & \leq 2 R(P,\OPT) +O\left(\sqrt{(k^3\log(m)+ k^2\log(1/\delta))/m }\right).
\end{align*}
As discussed above, this is the same multiplicative approximation factor as the optimal offline algorithm. The additive error is larger by a factor of $k$.

We now prove \thmref{rtheorem}. 
Note that by the definition of goodness for $Z$ of size $k$, it follows that if \optalg\ succeeds in selecting $k$ centers, then the solution it finds has a risk of at most $\rsk$ on $S_0$. We thus need to show that indeed $k$ centers are selected with a high probability, that $\rsk$ is close to the optimal achievable risk, and that the risk on $S_0$ is close to the risk on $P$. 
We use the following lemma, proved in the supplementary material based on Bernstein's inequality.
\begin{lemma}\label{lem:bernstein}
  Let $Y_1,\ldots,Y_n$ be i.i.d.~random variables in $[0,1]$ with mean $\mu \geq 10\ln(\frac{1}{\delta})/n$. Let $\hat{\mu} = \frac{1}{n}\sum_{i\in[n]}Y_i$ be the empirical mean. Then, with a probability at least $1-\delta$, $\hat{\mu} \geq \mu/2$. 
\end{lemma}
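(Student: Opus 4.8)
The plan is to prove this relative lower-deviation bound by applying the one-sided Bernstein inequality to $\hat\mu$, combined with the elementary but crucial observation that for $[0,1]$-valued variables the variance is controlled by the mean. First I would record the variance bound: since $Y_i \in [0,1]$ we have $Y_i^2 \le Y_i$, hence $\sigma^2 := \Var(Y_1) = \E[Y_1^2] - \mu^2 \le \mu - \mu^2 \le \mu$. This inequality is exactly what converts an additive Bernstein tail into a useful multiplicative (relative) statement, and it is the reason the hypothesis is phrased as a lower bound on $\mu$ rather than on the variance.

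Next I would invoke the standard lower-tail Bernstein inequality: for independent $Y_i \in [0,1]$ with mean $\mu$ and variance $\sigma^2$, and any $t > 0$,
\[
\P[\hat\mu \le \mu - t] \le \exp\!\left(-\frac{nt^2}{2\sigma^2 + \tfrac{2}{3}t}\right).
\]
I would apply this with $t = \mu/2$, so that $\{\hat\mu < \mu/2\} \subseteq \{\hat\mu \le \mu - t\}$, giving a tail bound of $\exp\!\left(-\frac{n\mu^2/4}{2\sigma^2 + \mu/3}\right)$. Substituting the variance bound $\sigma^2 \le \mu$ into the denominator yields $2\sigma^2 + \mu/3 \le \tfrac{7}{3}\mu$, and since the numerator carries a factor $\mu^2$, one power of $\mu$ cancels and the bound collapses to the clean exponent $\exp\!\left(-\tfrac{3n\mu}{28}\right)$.

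Finally I would close the loop using the hypothesis $\mu \ge 10\ln(1/\delta)/n$, i.e. $n\mu \ge 10\ln(1/\delta)$. Plugging this in gives $\exp\!\left(-\tfrac{3n\mu}{28}\right) \le \exp\!\left(-\tfrac{30}{28}\ln(1/\delta)\right) = \delta^{15/14} \le \delta$, where the last step uses $\delta \in (0,1)$ and $15/14 > 1$. Taking complements then yields $\hat\mu \ge \mu/2$ with probability at least $1-\delta$. The only point requiring care is the bookkeeping of constants, so that the threshold constant $10$ in the hypothesis is large enough to push the exponent strictly past $\ln(1/\delta)$; I expect no genuine obstacle beyond this, since the single idea doing the real work is the variance-by-mean bound $\sigma^2 \le \mu$, after which the remainder is direct substitution and routine algebra.
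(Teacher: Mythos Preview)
Your proof is correct and takes essentially the same approach as the paper: both use the one-sided Bernstein inequality together with the key bound $\sigma^2 \le \mu$ for $[0,1]$-valued variables, and then check that the constant $10$ suffices. The only cosmetic difference is that the paper starts from the high-probability deviation form $\mu - \hat\mu \le \frac{\ln(1/\delta)}{3n} + \sqrt{\frac{2\sigma^2\ln(1/\delta)}{n}}$ and bounds the right-hand side by $\mu/2$, whereas you start from the exponential tail form with $t = \mu/2$ and bound the probability by $\delta$; these are equivalent.
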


Denote the sizes of $S_0,S_1,\ldots,S_k,\bar{S}$ by $m_0,m_1,\ldots,m_k,\bar{m}$ respectively.
First, we show that \optalg\ selects $k$ centers with a high probability.

\begin{lemma}
  \label{lem:rselectk}
  With a probability at least $1-\delta/2$, by the end of the run \optalg\ has collected $k$ centers.
\end{lemma}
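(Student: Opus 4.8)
The strategy is to show that throughout the run, the partial solution $T_{\out}$ collected so far is always $(\rsk,q)$-\estgood, and that goodness of a partial set of size $j<k$ guarantees — with high probability over the second phase $\bar S$ — that some example arrives which can be legally added while preserving goodness. Since $\emptyset$ is $(\rsk,q)$-\estgood\ by the choice of $\rsk$ in line~\ref{linerR}, the base case holds. For the inductive step, fix a partial solution $Z$ of size $j<k$ that is $(\rsk,q)$-\estgood. By definition this means $\hat\psi_{\rsk,q}(Z) = \P_{X\sim S_{j+1}}[Z\cup\{X\}\text{ is }(\rsk,q)\text{-\estgood}] \geq 2q$, i.e.\ at least a $2q$-fraction of the points in the estimation block $S_{j+1}$ extend $Z$ to a good set. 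The key point is that whether $Z\cup\{x\}$ is good depends only on $x$ and the fixed estimation samples $S_0,\ldots,S_k$, not on the second phase; so the set of "useful" points is a fixed subset of $\cX$ whose empirical mass on $S_{j+1}$ is at least $2q$. First I would use a concentration argument (in the direction empirical-mass $\geq$ true-mass, which is exactly the content of \lemref{bernstein}) to conclude that the true probability mass $\psi_{\rsk,q}(Z):=\P_{X\sim P}[Z\cup\{X\}\text{ is good}]$ is at least $q$, provided $2q \geq 10\ln(1/\delta')/m_{j+1}$ for the relevant failure probability $\delta'$; this is where the definition of $q$ in line~\ref{lineqdefinition} gets used, together with $m_{j+1} = \Theta(m/k)$.

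Given that the true mass of useful extensions is $\geq q$, the probability that none of the $\bar m = m/2$ examples in $\bar S$ is useful for the current $Z$ is at most $(1-q)^{\bar m}\leq e^{-q\bar m}$. But here there is a subtlety I'd need to handle carefully: as the algorithm proceeds it moves through a chain of at most $k$ different sets $Z_0=\emptyset \subset Z_1 \subset \cdots$, and the "useful set" changes each time a center is added. The clean way is a union bound over the $k$ steps: define the bad event $\cE_j$ = "after having collected exactly $j$ centers (a good set $Z_j$), the remaining suffix of $\bar S$ contains no point that extends $Z_j$ to a good set". Each $\cE_j$ has probability at most $e^{-q\bar m/\,?}$ — one must be slightly careful that when step $j$ completes there may be few examples left, but since the addition of useful points only happens at the first opportunity, one can charge disjoint portions of $\bar S$ to the $k$ steps, or more simply argue that conditioned on reaching a good $Z_j$ at time $t$, the remaining $\bar m - (t - m/2)$ draws are still i.i.d.\ $P$ and independent of the estimation blocks, and as long as the total is large enough the bound $k\cdot e^{-q\bar m/k}$ or similar holds. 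Plugging $q = (32k^2\log(8m)+32k\log(8/\delta))/m$ and $\bar m = m/2$ makes $q\bar m = \Omega(k^2\log m + k\log(1/\delta))$, so $k\cdot e^{-\Omega(q\bar m/k)} \leq \delta/4$, and together with the $\delta/4$ spent on the $k$ invocations of \lemref{bernstein} (one per level, also union-bounded over the at most $m^k$ candidate sets, which is why $\log(8m)$ carries a $k^2$) we get total failure probability at most $\delta/2$.

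The main obstacle I expect is the bookkeeping around the second bullet: ensuring that "goodness of $Z_j$" really does transfer into a guarantee about the \emph{suffix} of the second phase, accounting for the fact that the algorithm might use up many early examples of $\bar S$ before a useful one appears, and that the recursive definition of goodness nests $k$ layers of such arguments. Concretely, one must verify that the events across the $k$ levels can be decoupled — the estimation blocks $S_1,\ldots,S_k$ are independent, so goodness at level $j$ is a statement only about $S_{j+1}$ and the fixed $S_0$, which makes the union bound over levels legitimate — and that after the high-probability event "empirical mass $\geq q$ at every level for every reachable set" holds, the selection process on $\bar S$ is a simple "wait for a success with per-trial probability $\geq q$, repeated $k$ times" process whose failure probability is controlled by $k e^{-q\bar m/k}$ (or an even better bound if one re-uses the draws). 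I would also double-check the edge case where $q\geq 1$ or where fewer than $k$ blocks are nonempty, which is ruled out by the hypothesis $m$ large relative to $k$ implicit in the $O(\cdot)$ convention of \secref{preliminaries}.
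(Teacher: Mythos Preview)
Your plan follows the paper's route: every partial $T_{\out}$ remains $(\rsk,q)$-\estgood, concentration (uniformly over all candidate partial sets) transfers the empirical extension mass $\hat\psi\geq 2q$ to a true extension mass $\geq q$, and then $\bar S$ supplies $k$ successful extensions. Two points to tighten. First, the concentration you invoke goes the wrong way: \lemref{bernstein} assumes the \emph{true} mean is large and lower-bounds the empirical mean, whereas here you know $\hat\psi\geq 2q$ and need $\psi\geq q$; that is exactly \lemref{empbernstein}, and it is what the paper uses at this step. Second, you must also union-bound over the candidate values of $\rsk$: since $\rsk$ is selected adaptively from $S_0,\ldots,S_k$ (and in particular from the very block $S_{j+1}$ on which $\hat\psi$ is computed), one cannot condition it away. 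The paper handles this by restricting $\rsk$ to the grid $\Upsilon=\{\beta_m(1+\beta_m)^n:n\in\nats\}\cap(0,1)$, shows $|\Upsilon|\leq m$, and folds it into the union bound at the cost of one more $\log m$.

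For the second phase, your suffix-by-suffix union bound with $k\cdot e^{-q\bar m/k}$ can be made to work, but the paper avoids precisely the bookkeeping you flag as the main obstacle. Once the uniform concentration event holds, at \emph{every} iteration with $|T_{\out}|<k$ the next draw from $\bar S$ extends $T_{\out}$ to a good set with probability at least $q$; hence the number of centers collected stochastically dominates (a truncation of) $\mathrm{Binomial}(\bar m,q)$. A single application of \lemref{bernstein} (now in the appropriate direction) with $\mu=q\geq 10\ln(4/\delta)/\bar m$ gives $\hat s\geq q/2\geq k/\bar m$, i.e.\ at least $k$ successes, with probability $1-\delta/4$. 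This sidesteps the chain of shrinking suffixes entirely.
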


\begin{proof}
  Let $\Upsilon = \{\beta_m(1+\beta_m)^n \mid n \in \nats\} \cap (0,1)$ be the possible values of $\rsk$ examined by the algorithm which are smaller than $1$.
  Note that since $\exp(x/2) \leq 1+x$ for $x \in (0,1)$, the largest $n$ such that $\beta_m(1+\beta_m)^n < 1$ satisfies $\beta_m \exp(n\beta_m/2) < 1$. Therefore, $|\Upsilon| \leq \frac{2}{\beta_m}\log(1/\beta_m) = \sqrt{m}\log(m) \leq m$.
  By \lemref{empbernstein} and a union bound, with a probability at least $1-\delta/4$, for any $\rsk \in \Upsilon$, $j \in \{0,\ldots,k-1\}$, and $T \subseteq \bar{S}$ of size $j$,
  $\hat{\psi}_{\rsk,q}(T) 
  \geq 16 \ln(8\bar{m}^k/\delta)/(m_{j+1}-1) 
    \Rightarrow \quad\P[T \cup \{X\}\text{ is $(\rsk,q)$-\estgood}] \geq \hat{\psi}_{\rsk,q}(T)/2.$
  Condition below on this event. 
      Let $\rsk$ be the value selected by \optalg, let $T_i$ be the set of points collected by the algorithm until iteration $i$, and let $j = |T_i| < k$.
    If $T_i = \emptyset$, then it is $(\rsk,q)$-\estgood\ by the definition of $\rsk$. Otherwise, it is $(\rsk,q)$-\estgood\ by the condition on line \ref{step:condition}. 
    Therefore, by definition, $\hat{\psi}_{\rsk,q}(T_i) \geq 2q$. This implies the LHS of the implication above, hence \mbox{$\P[T_i\cup \{X\}\text{ is $(\rsk,q)$-\estgood}] \geq q$}.

    Therefore, conditioned on the event above, the probability that the next sample $x_j$ satisfies that $T_i \cup \{x_j\}$ is $(\rsk,q)$-\estgood\ is at least $q$. Since this holds for all iterations until there are $k$ centers in $T_i$, the probability that the algorithm collects less than $k$ centers is at most the probability of obtaining less than $k$ successes in $\bar{m} = m/2$ independent experiments with a probability of success $q$.
    Let $\hat{s}$ be the empirical fraction of successes on $m/2$ experiments. By \lemref{bernstein}, since $q \geq 10\log(4/\delta)/(m/2)$, with a probability $1-\delta/4$, $\hat{s} \geq q/2$. Since $q \geq 2k/m$, we have $\hat{s} \geq k/m$.  Therefore, taking a union bound, with a probability of at least $1-\delta/2$, the algorithm selects $k$ centers.    
\end{proof}

We now show that the value of $\rsk$ selected by \optalg\ is close to the optimal risk.
By Hoeffdings's inequality and a union bound over the possible choices of $T$, for all $T \subseteq S \setminus S_0$ of size $k$, with a probability $1-\delta/4$,
$|R(P,T)-R(S_0,T)| \leq \sqrt{(2k\ln (m) +2\ln(\frac{8}{\delta}))/m}$. 
Call this event $E_0$ and denote the RHS by $\epsilon_2$. 

\begin{lemma}
\label{lem:Rbound}
Let $\gamma \in (0,\frac{1}{2})$, and define the value $\rsk_0 := ((2+2\gamma)R(P,\OPT) +4qk/\gamma+ \epsilon_2)$. With a probability of $1-\delta/4$, $E_0$ implies that the value of $\rsk$ set by \optalg\ satisfies  $\rsk \leq (1+\beta_m)\rsk_0$.
\end{lemma}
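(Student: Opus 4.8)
The plan is to exhibit a particular value $\rsk'$ in the grid $\Upsilon = \{\beta_m(1+\beta_m)^n\}$ which is close to $\rsk_0$ and for which one can show that $\emptyset$ is $(\rsk',q)$-\estgood; since \optalg\ picks the \emph{smallest} good value in the grid, this gives $\rsk \le \rsk'$, and then rounding down by one grid step gives $\rsk' \le (1+\beta_m)\rsk_0$ (using that $\rsk_0 \ge \beta_m$, which holds since $\epsilon_2 \ge \beta_m = 1/\sqrt m$ up to constants — this needs a quick sanity check). So the real content is: \emph{the target value $\rsk_0$, suitably rounded up to the grid, makes the empty set good.}

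To prove $\emptyset$ is $(\rsk',q)$-\estgood, I would unfold the recursive definition of goodness: $\emptyset$ is good iff a $2q$ fraction of $x\in S_1$ make $\{x\}$ good, which in turn unfolds down the chain $S_1,\ldots,S_k$. The clean way to lower-bound these nested empirical probabilities is to relate them to a distributional quantity. Fix the optimal centers $\OPT = \{o_1,\ldots,o_k\}$ (or rather their "representatives" — but note \optalg\ must pick centers from the sample, so I'd instead work with the optimal clustering and invoke a step like \lemref{concentrate} / \lemref{quantilemarkov}). The idea: for each $i$, a point $x\in S_i$ that falls into a small ball $\qball$ around $o_i$ is, by \lemref{quantilemarkov}-type reasoning, almost as good a center for cluster $i$ as $o_i$ itself; the probability mass of such a ball is at least something like $2q$ by the choice of $q$ in line~\ref{lineqdefinition} (this is exactly the calibration $q = (32k^2\log(8m)+32k\log(8/\delta))/m$ is designed for, mirroring \lemref{minimum_q}). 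Assembling one such point per coordinate $i=1,\ldots,k$ yields, via \lemref{concentrate} applied to the uniform distribution on $S_0$ with $\tau \approx q$ (plus discretization), a set $Z$ of size $k$ with $R(S_0,Z) \le (2+2\gamma)R(S_0,\OPT) + k\tau/\gamma$. Using event $E_0$ to pass from $R(S_0,\OPT)$ to $R(P,\OPT)$ (paying $\epsilon_2$), this is at most $\rsk_0$ up to a grid step, so such $Z$ is $\rsk'$-\estgood; and the nested-probability bookkeeping shows each prefix of the construction is good with the required $\ge 2q$ (in fact $\ge q$) probability at each level, where a union bound over the $k$ levels and the application of \lemref{empbernstein}/Bernstein to control empirical vs.\ true probabilities is what costs the extra factors of $k$ in $q$. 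This chain of conditioning succeeding with probability $\ge 1-\delta/4$ is the "$E_0$ implies" clause.

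The main obstacle, I expect, is the recursive/nested structure: goodness of $\emptyset$ is not a single empirical-probability statement but a depth-$k$ alternation, so one cannot just apply a concentration inequality once. The right move is probably an \emph{inductive} claim of the form "for every $j$ and every prefix $\{x_1,\ldots,x_j\}$ obtained by the greedy ball-selection, the conditional probability that a random $x\in S_{j+1}$ extends it to a good prefix is $\ge 2q$," proved by downward induction on $j$ from $k$ to $0$, with the base case $j=k$ being the $R(S_0,\cdot)\le\rsk'$ bound from \lemref{concentrate}. Getting the constants in $q$ to line up with the $16\ln(8\bar m^k/\delta)/(m_{j+1}-1)$ threshold from the proof of \lemref{rselectk}, and making sure the $\gamma$ and the discretization errors are absorbed into the single extra $(1+\beta_m)$ factor, is the fiddly part but should go through by the design of the parameters.
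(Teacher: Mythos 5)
Your proposal follows essentially the same route as the paper: balls of distributional mass about $4q$ around the optimal centers so that, by Bernstein, at least a $2q$-fraction of each $S_i$ lands in them; \lemref{concentrate} plus $E_0$ to bound $R(S_0,Z)\le \rsk_0$ for every transversal $Z$ of these balls; downward induction on the recursive goodness definition to conclude that $\emptyset$ is good; and geometric-grid rounding for the $(1+\beta_m)$ factor. The only noteworthy difference is that the paper applies \lemref{concentrate} to $Q:=P$ with $\tau:=4q$ and then transfers $R(P,Z)$ to $R(S_0,Z)$ via $E_0$, which avoids the extra concentration step your variant (applying it to the uniform distribution on $S_0$) would need in order to control the empirical ball masses $B^o_{S_0}(o_i,d_i)$.
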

\begin{proof} 
  Let $j \in \{0,\ldots,k\}$. For sets $D_1,\ldots,D_j$, denote by $\bar{D}_j$ the collection of all sets of size $j$ that include exactly one element from each of $D_1,\ldots,D_j$. 
We start by showing that with a high probability, there exist sets $D_1,\ldots,D_k$ such that for all $i\in [k]$, $D_i \subseteq S_i$, $|D_i|\geq 2qm_i$, and $\max_{Z \in \bar{D}_k}R(S_0,Z) \leq \rsk_0.$ Let $\OPT = \{o_1,\ldots,o_k\} \subseteq \cX$ be an optimal $k$-clustering for $P$. 
For $i \in [k]$, let $\alpha_i \geq 0$ such that $\P[\ball(o_i,\alpha_i)] \geq 4q$ and $\P[\rho(X,o_i) < \alpha_i] \leq 4q$. Let $D_i=\ball(o_i, \alpha_i) \cap S_i$. Denote $B_i = \P[\ball(o_i, \alpha_i)]$. By \lemref{bernstein}, since
$B_i \geq 4q \geq 10\ln(4k/\delta)/m_i$, we have that with a probability at least $1-\delta/4$, for all $i \in [k]$, $|D_i|/|S_i| \geq 2q$, as required.

We now show that  $\max_{Z \in \bar{D}_k} R(S_0,Z) \leq \rsk_0$.
By the definition of $\alpha_i$, for any $d_i \in D_i$ we have $B^o_{P}(c_i,d_i) \leq 4q$, where $B^o$ is defined above \lemref{quantilemarkov}. Therefore, the conditions of \lemref{concentrate} hold with $Q := P$, $O := \OPT$, $T := Z$ and $\tau := 4q$. Hence, for $\gamma \in (0,\half)$, 
\[
  R(P, Z) \leq (2+2\gamma)R(P,\OPT) + 4qk/\gamma.
\]
Under $E_0$, we get that for all $Z \in \bar{D}_k$, $R(S_0, Z) \leq (2+2\gamma)R(P,\OPT) + 4qk/\gamma + \epsilon_2 \equiv \rsk_0.$

Lastly, we show that the existence of $D_1,\ldots,D_k$ implies an upper bound on the value of $\rsk$ set by the algorithm. First, we show that $\emptyset$ is $(\rsk_0,q)$-\estgood. This can be seen by induction on the definition of goodness: For $|Z| = k$, all $Z \in \bar{D}_k$ are $(\rsk_0,q)$-\estgood\ since $R(S_0, Z) \leq \rsk_0$. Now, suppose that all sets $Z \in \bar{D}_{j}$ for some $j \in [k]$ are $(\rsk_0,q)$-good, and let $Z' \in \bar{D}_{j-1}$. Then, since for all $x\in D_{j}$ we have $Z' \cup \{x\} \in \bar{D}_{j}$, it follows that
$\hat{\psi}_{\rsk_0,q}(Z') 
= \P_{X \sim S_j}[Z' \cup \{X\} \text{ is $(\rsk_0,q)$-\estgood}] 
  \geq |D_j|/|S_j| \geq 2q.$
  Therefore, by definition, $Z'$ is $(\rsk_0,q)$-\estgood. By induction, we conclude that $\emptyset \in \bar{D}_0$ is also $(\rsk_0,q)$-\estgood. Clearly, $\emptyset$ is also $(\rsk_1,q)$-\estgood\ for any $\rsk_1 \geq \rsk_0$. 
  Since the value $\rsk$ selected by \optalg\ is set to the smallest value $\beta_m(1+\beta_m)^n$ such that $n$ is natural and $\emptyset$ is $(\rsk,q)$-good, and since $\rsk_0 \geq \epsilon_2 \geq \beta_m$, we conclude that $\rsk \leq \rsk_0(1+\beta_m)$, as required.
\end{proof}

The proof of \thmref{rtheorem} can now be provided.
\begin{proof}[Proof of \thmref{rtheorem}]
  Assume that $E_0$ holds, as well as the events of \lemref{rselectk} and \lemref{Rbound}. This occurs with a probability at least $1-\delta$. By \lemref{rselectk} the algorithm selects $T_{\out}$ which is of size $k$ and is $(\rsk,q)$-good.  Thus, by the definition of goodness, $R(S_0,T_{\out}) \leq \rsk$. By $E_0$, $R(P,T_{\out}) \leq \rsk + \epsilon_2$. By \lemref{Rbound}, 
$\rsk \leq (1+\beta_m)((2+2\gamma)R(P,\OPT) +4qk/\gamma+ \epsilon_2).$
 The theorem follows by plugging in the values of $\beta_m,q,\epsilon_2$ and simplifying. 
\end{proof}

\section{Experiments} \label{sec:experiments}

We demonstrate \algname\ \footnote{ Our code is available
at  \url{https://github.com/tomhess/No_Substitution_K_Median}.} on 3 datasets: \texttt{MNIST}
\citep{lecun1998gradient}, \texttt{Covertype}\footnote{Reuse of this database is unlimited with retention of copyright notice for Jock A. Blackard and Colorado State University.}  and \texttt{Census 1990} \citep{Dua2019}.  While Alg.~\ref{SKM}
uses $q =43\ln(2 m^2/\delta)/m$, it can be seen from the
proof of \thmref{mtheorem} that except for very small values of $m$, the guarantees of \algname\ hold also with significantly smaller values. In the experiments we used $q = 9\ln(2m^2/\delta)/m$. In all experiments,
the features were normalized, and PCA was used to reduce the dimension, so that 95\% of the signal was retained. As black-box $k$-median
algorithms, we used the implementation of $k$-medoids in \cite{Novikov2019}, and the BIRCH algorithm \cite{zhang1997birch}, implemented in \cite{scikit-learn}. All risks were estimated on
the same holdout set, and averaged over 20 runs. \figref{exp} reports the ratio between the clustering risk obtained by
\algname\ and the risk of the offline algorithm, for $k$-medoids. Results for BIRCH are reported in the supplementary material. It can be seen that in practice, the risk ratio obtained by \algname\ is usually close to $1$. The results for large stream sizes, provided in the supplementary material, show a convergence to values very close to $1$. As expected, the convergence is slower for larger values of $k$.

\begin{figure}[t]
  \begin{center}
  \includegraphics[width = 0.49\columnwidth]{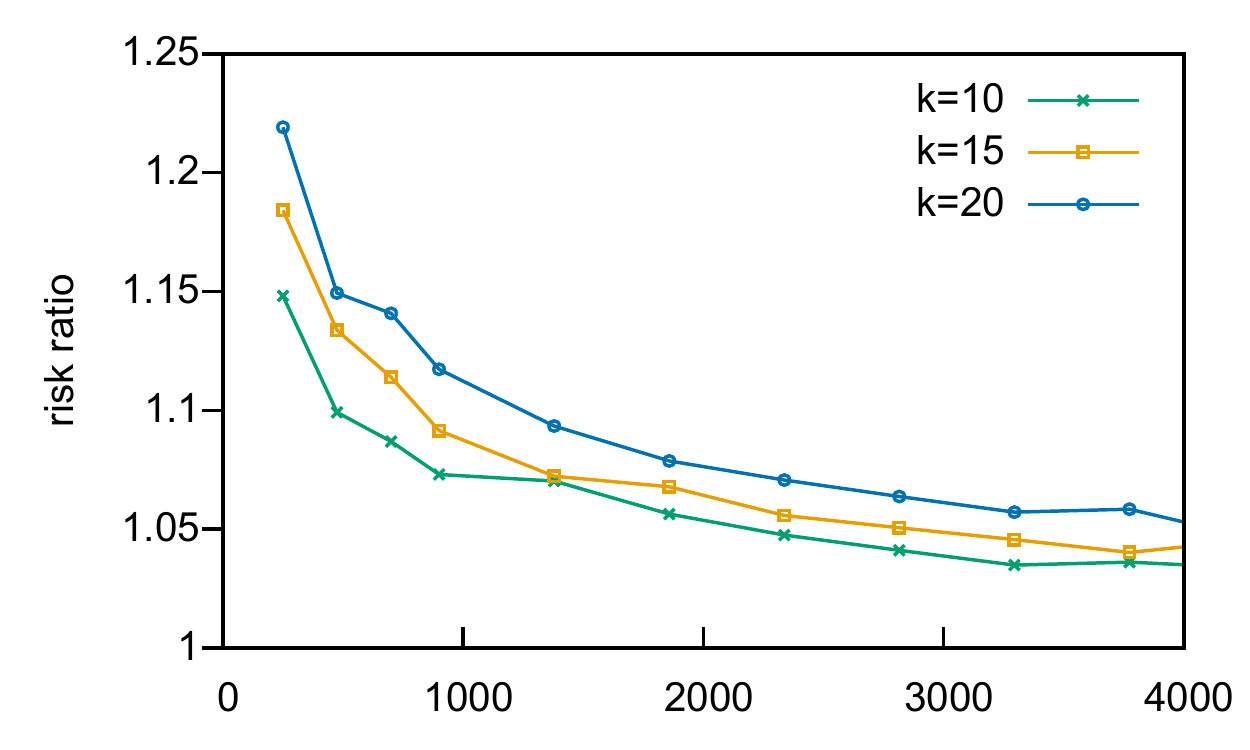}
  \includegraphics[width = 0.49\columnwidth]{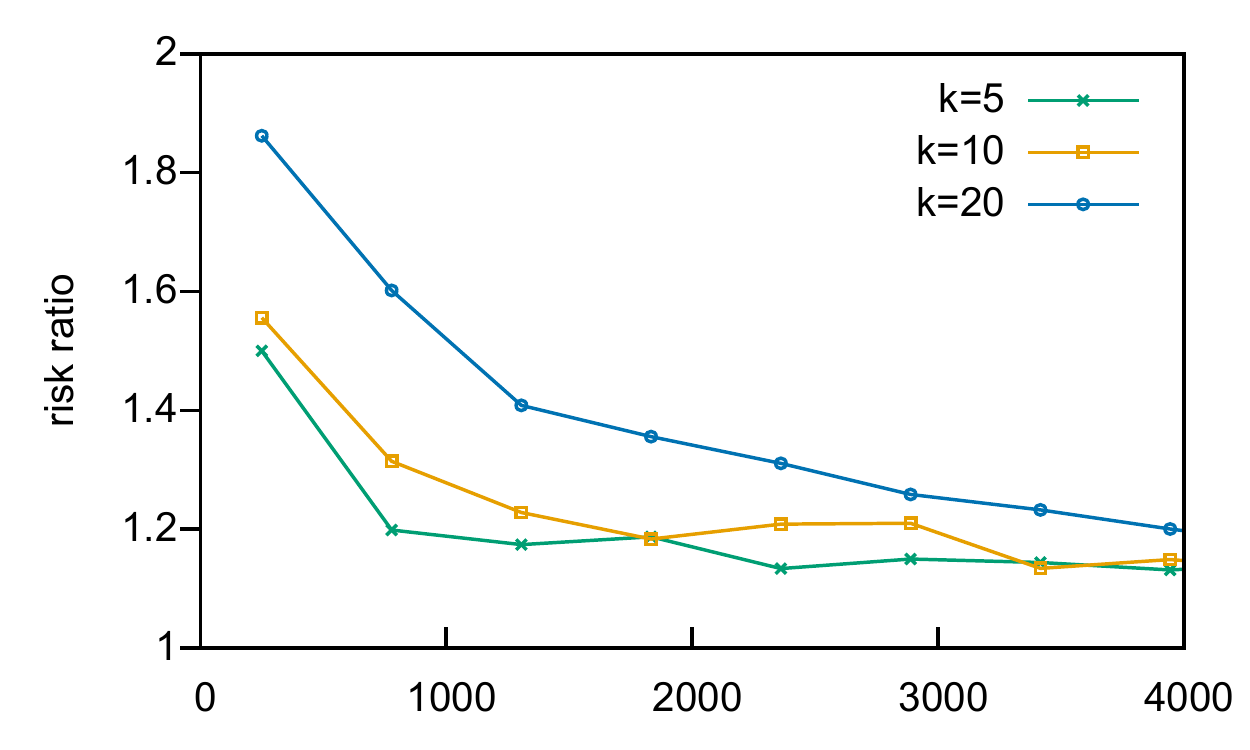}\\
  \includegraphics[width = 0.49\columnwidth]{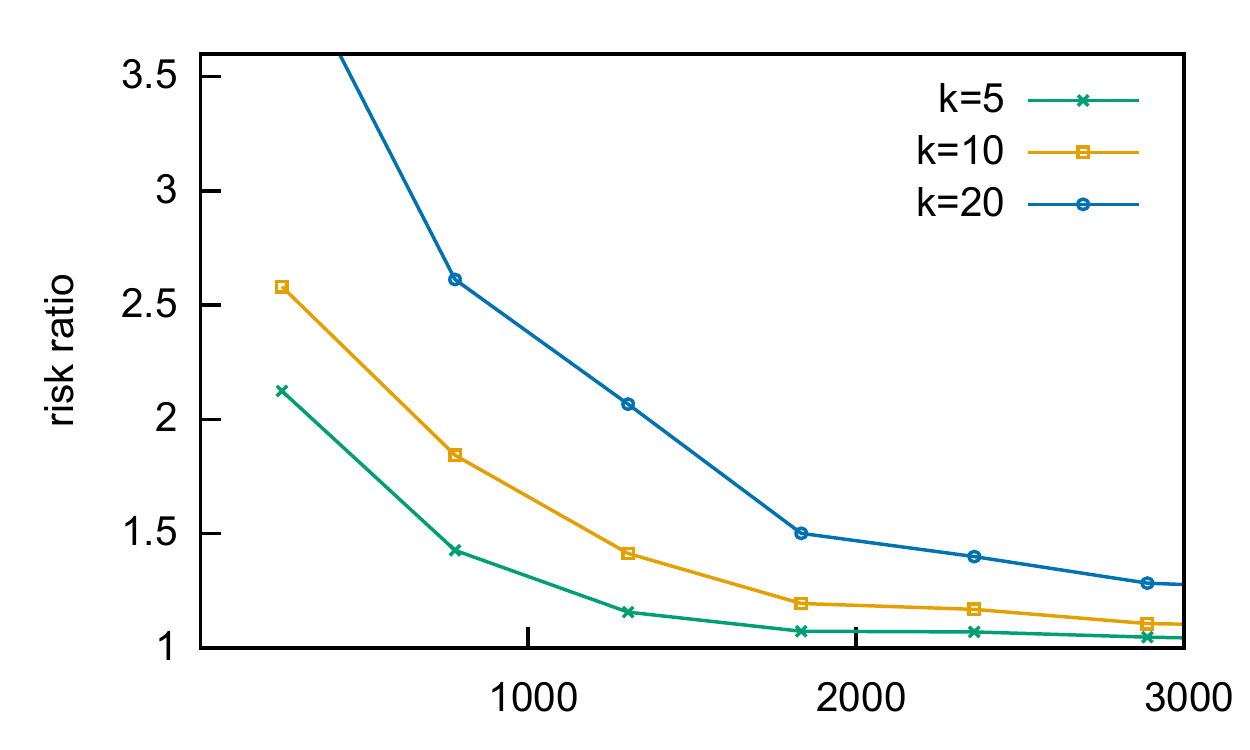}\\
  \caption{Risk ratio between \algname\ with $k$-medoids and offline $k$-medoids as a function of the stream size, for various values of $k$. Top left:\texttt{MNIST}, Top right: \texttt{Covertype}, Bottom: \texttt{Census}.}
  \label{fig:exp}
  \end{center}
\end{figure}

\section{Discussion} \label{sec:discussion}
In this work, we obtained an approximation factor which is twice that of the sample-based offline algorithm in the no-substitution setting. We showed that when disregarding computational considerations, the factor of 2 can be removed. It is an open question whether there is an efficient no-substitution algorithm with the same approximation factor as the best efficient offline algorithm. 
\optalg\ obtains an improved approximation factor by requiring that only centers with many possible choices of other centers are selected. This is related to notions of stability, or robustness, which have been previously studied for clustering algorithms in other contexts \cite[see, e.g.,][]{lange2004stability, ackerman2013clustering}, and more generally for learning algorithms \citep{bousquet2002stability}. The relationship between stability of algorithms and success in the no-substitution setting is an interesting direction for future research.

\paragraph{Acknowledgements}
This work was supported in part by the Israel Science Foundation (grant No. 555/15), and by the Lynn and Williams Frankel Center for Computer Science at Ben-Gurion University.

\bibliographystyle{abbrvnat}
\bibliography{Mybib}

\clearpage
\appendix

\twocolumn[
\begin{center}
  AISTATS 2020 Supplementary Material\\
  \vspace{1em}
  {\Large \textbf{\papertitle}}\\
  \vspace{0.5em}
  {\large Tom Hess and Sivan Sabato}\\
  \vspace{0.5em}
  \hrule
\end{center}
\vspace{1em}
]

\section{Bernstein and empirical Bernstein inequalities}

\begin{proof}[Proof of \lemref{empbernstein}]
We use the Empirical Bernstein inequality of \citep{maurer2009empirical}. This inequality states that for $\hat{\sigma}^2 :=\frac{1}{2n(n-1)}\sum_{i,j \in [n], i \neq j}(Y_i-Y_j)^2$, with a probability at least $1-\delta$, we have
\[
\hat{\mu} - \mu \leq \frac{7 \ln (\frac{2}{\delta})}{3(n-1)} + \sqrt{\frac{2\hat{\sigma}^2 \ln(\frac{2}{\delta})}{n}}.
\]
We have $\hat{\sigma}^2 \leq \frac{n}{2(n-1)} \frac{1}{n^2}\sum_{i,j \in [n]} (Y_i - Y_j)^2 = \frac{n}{2(n-1)} \E[(Y - Y')^2]$, where $Y,Y'$ are drawn independently and uniformly from the fixed sample $Y_1,\ldots,Y_n$. Since $\E[(Y - Y')^2] \leq 2\E[Y^2]$, and $Y \in [0,1]$, we have $\hat{\sigma}^2 \leq \frac{n}{n-1}\E[Y] \equiv \frac{n}{n-1}\hat{\mu}$. Therefore,
\[
  \hat{\mu} - \mu \leq \frac{7 \ln (\frac{2}{\delta})}{3(n-1)} + \sqrt{\frac{2\hat{\mu} \ln(\frac{2}{\delta})}{n-1}}.
  \]
  If $\hat{\mu} = a \ln(\frac{2}{\delta})/(n-1)$ for $a \geq 16$, then the RHS is at most \[
    (7/3 + \sqrt{2a})\ln(\frac{2}{\delta})/(n-1)\leq a/2 \cdot \ln(\frac{2}{\delta})/(n-1) \leq \hat{\mu}/2.
    \]
\end{proof}

\begin{proof}[Proof of \lemref{bernstein}]
  Let $\sigma^2 = \Var[Y_i]$. By Bernstein's inequality \citep{Hoeffding63} (See, e.g., \citealt{maurer2009empirical} for the formulation below),
  \[
  \mu - \hat{\mu} \leq \frac{\ln(\frac{1}{\delta})}{3n}+\sqrt{\frac{2\sigma^2\ln(\frac{1}{\delta})}{n}}.
\]
Since $Y_i$ are supported on $[0,1]$, we have $\sigma^2 \leq \mu$. 
Since $\mu = a \ln(\frac{1}{\delta})/n$ for $a \geq 10$, we have that the RHS is equal to $(1/3 + \sqrt{2a})\ln(\frac{1}{\delta})/n\leq a/2 \cdot \ln(\frac{1}{\delta})/n \leq \mu/2$. The statement of the lemma follows. 
\end{proof}

\section{Tightness of multiplicative factor of \algname}

\begin{proof}[Proof of \thmref{2example}]
  We define a weighted undirected graph $G=(V,E,W)$, and let $(\cX,\rho)$ be a metric space such that $\cX = V$ and $\rho(u,v)$ is the length of the shortest path in the graph between $u$ and $v$. $G$, which is illustrated in \figref{example2fnew}, is formally defined as follows. The set of nodes is $V := U \cup Y \cup \{o,v\}$, where $U :=[0,1]$ and  $Y:=[3,4]$. The set of edges is
  \[
    E:=\{\{u,o\} \mid u \in U \} \cup   \{\{v,y\} \mid y \in Y \} \cup \{ \{o,v\}\}.
  \]
  Denote $m_1 := m/2$, and let $\eta := 1/(4m_1)$. The weight function $W$ assigns a weight of $1$ to all edges except for those that have a node in $Y$ as an endpoint, which are assigned a weight of $2-\eta$. 
  \begin{figure}[H]
   \centering
   \includegraphics[height=1.3in]{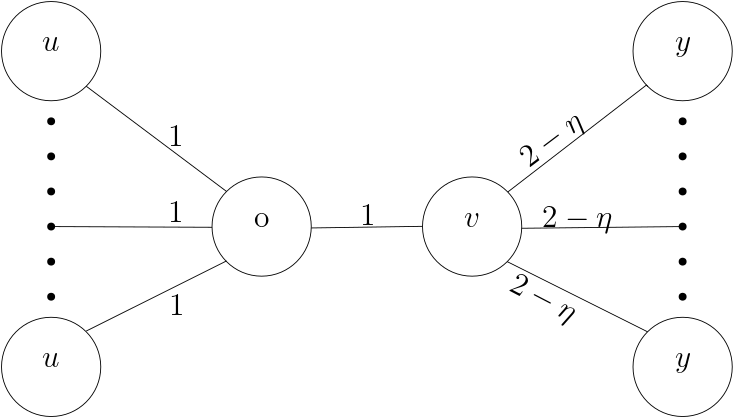}
   \caption{Illustration of the graph $G$ which defines the metric space.} 
   \label{fig:example2fnew}
\end{figure}

Define the distribution $P$ over $\cX$ such that $P(o) = 0$, $P(v) = 1/m_1$, $P(Y) = 2q$, with a uniform conditional distribution over $Y$. Lastly,  $P(U) = 1-2q-\frac{1}{m_1}$, with a uniform conditional distribution over $U$. Note that the latter is positive for a large enough $m_1$, since $q(m_1) \rightarrow 0$. 
 
Let $S \sim P^m$ be the i.i.d.~sample used as an input sequence to \algname,  and set $k=1$. Let $S_1$ be the sample observed in the first phase of \algname, of size $m_1$. Define the following events:
\begin{enumerate}
\item   $E_1:=\{o \notin  S_1\}$. 

\item   $E_2:=\{v\text{ appears in }S_1\}.$ 
\item $E_3:=  \{$at least $q m_1$ of the samples in $S_1$ are from $Y \}$. 
 
\end{enumerate}  
First, observe that all these events occur together with a positive probability, as follows. $\P[E_1] = 1$ since $P(o) = 0$. For $E_2$, we have
\[
\P[E_2] > 1-(1-\frac{1}{m_1})^{m_1} >\frac{1}{2}.
\]
For $E_3$, note that the probability mass of $Y$ is $2q$, Apply \lemref{bernstein} with $\mu = 2q$, $n = m_1$ and a confidence value of $1/4$. By the assumption of the theorem, for sufficiently small $\delta$, we have $q \geq 5\log(4)/m_1$. Therefore, \lemref{bernstein} implies that $P[E_3] \geq 3/4$. 

It follows that $\P[E_1\wedge E_2 \wedge E_3] \geq 1/4$.

Now, assume that all the events above hold. By $E_1$, $o$ does not appear in $S_1$, and by $E_2$, $v$ appears in $S_1$. We show that out of the points in $S_1$, the $1$-clustering $\{v\}$ has the best empirical risk. The only other options in $S_1$ are centers from $Y$ or from $U$. For a center $u \in U$ from $S_1$, note that with a probability $1$, it does not have additional copies in $S_1$. Its distance from all other $u' \in U$ is the same as that of $v$, while its distance from points in $Y$ and from $v$ is larger. Thus, $R(S_1, \{u\}) > R(S_1, \{v\})$. For a center $y \in Y$, it too does not have additional copies in $S_1$. Its distance to all other points is larger than that of $v$. Thus, $R(S_1, \{y\}) > R(S_1, \{v\})$. 
Therefore, $v$ has the best empirical risk on $S_1$. Thus, $\cA(S_1)$ returns the $1$-clustering $\{v\}$.

By $E_3$, the number of instances of vertices from $Y$ is at least $qm_1$. Since the points in $Y$ are the closest to $v$ in $S_1$, we have $y' := \qpoint_{S_1}(v,q) \in Y$. Therefore, $\qball(v,y')=\{ v\} \cup Y$.  It follows that \algname\ selects as a center the first element from $\{ v\} \cup Y$ that it observes in the second phase. With a probability $\frac{2q}{2q+\frac{1}{m_1}}$, the first element that  \algname\ observes from $\{ v\} \cup Y$ is in $Y$. Since $q \geq 1/m_1$, this probability is at least $2/3$. Thus, the output center of \algname\ is from $Y$ with a constant probability.

However, the risk of this clustering is large:
\begin{align*}
  &R(P,\{y\})\\
  &=(4-\eta)(1-2q-\frac{1}{m_1})+(2-\eta)\frac{1}{m_1}+(4-\eta)2q.
\end{align*}
For large $m$, we have $m_1 \rightarrow \infty$. In addition, $q, \eta \rightarrow 0$. Hence, $R(P,\{y\}) \rightarrow 4$. 
In contrast, the risk using $o$ as a center is small:
\begin{align*}
  R(P,\{o\})&=( 1-2q-\frac{1}{m_1})+\frac{1}{m_1}+(3-\eta)2q.
\end{align*}
This approaches $1$ for large $m$. 
Therefore, for $m   \rightarrow \infty$, $R(P,\{y\})/R(P,\{o\}) \rightarrow 4$. Since $\{y\}$ is the output of \algname\ with a constant probability, the multiplicative factor obtained by \algname\ cannot be smaller than $4 = 2\beta$ in this case.
\end{proof}

\section{Full results of experiments}
The results of the experiments for large stream sizes with the $k$-medoids as the black box are reported in \figref{expfull}. The results for the BIRCH black-box are reported in \figref{expbirch} and in \figref{expfullbirch}. For the $k$-medoids black box, the risk ratios for large stream sizes are in the following ranges: \texttt{MNIST} $1.02-1.04$, \texttt{Covertype} $1.04-1.08$, \texttt{Census} $1-1.04$. For the BIRCH black box, the risk ratios for large stream sizes are in the following ranges: \texttt{MNIST} $1.03-1.04$, \texttt{Covertype} $1.05-1.1$, \texttt{Census} $1-1.02$. Thus, the risk ratio converges to a ratio very close to $1$.
\begin{figure}[t]
  \begin{center}
  \includegraphics[width = \columnwidth]{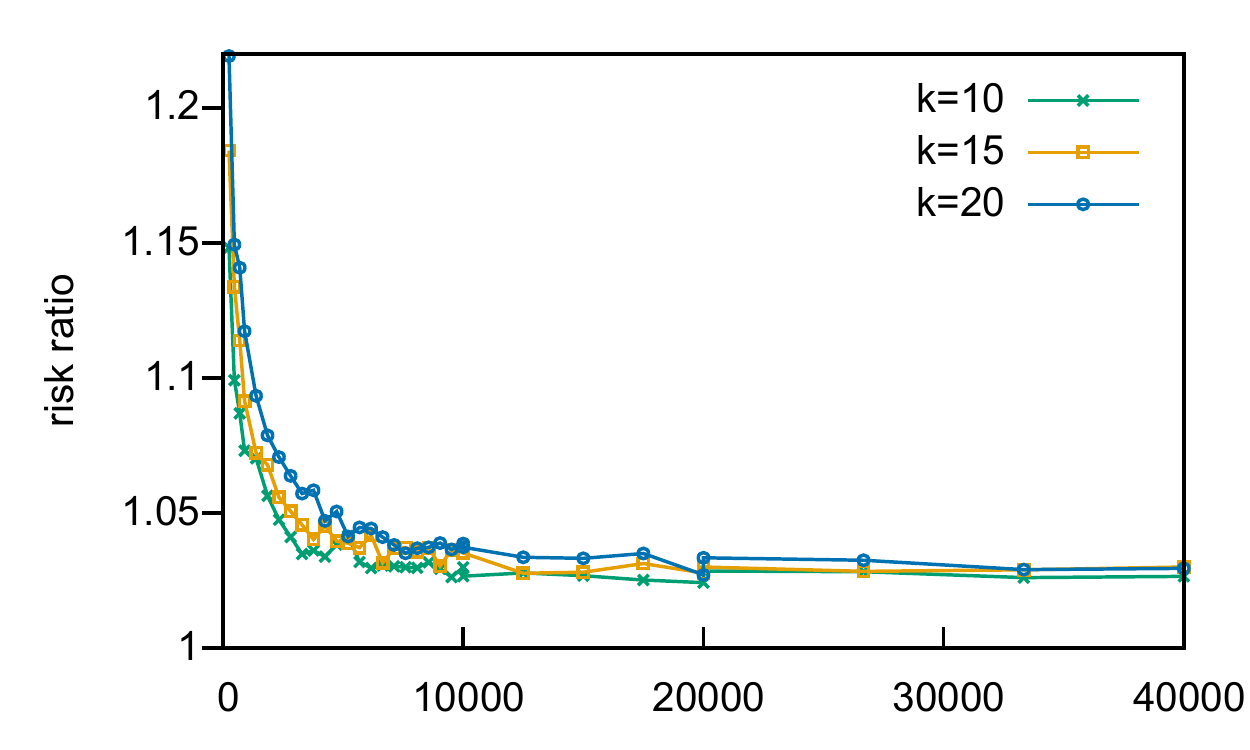}\\
  \includegraphics[width = \columnwidth]{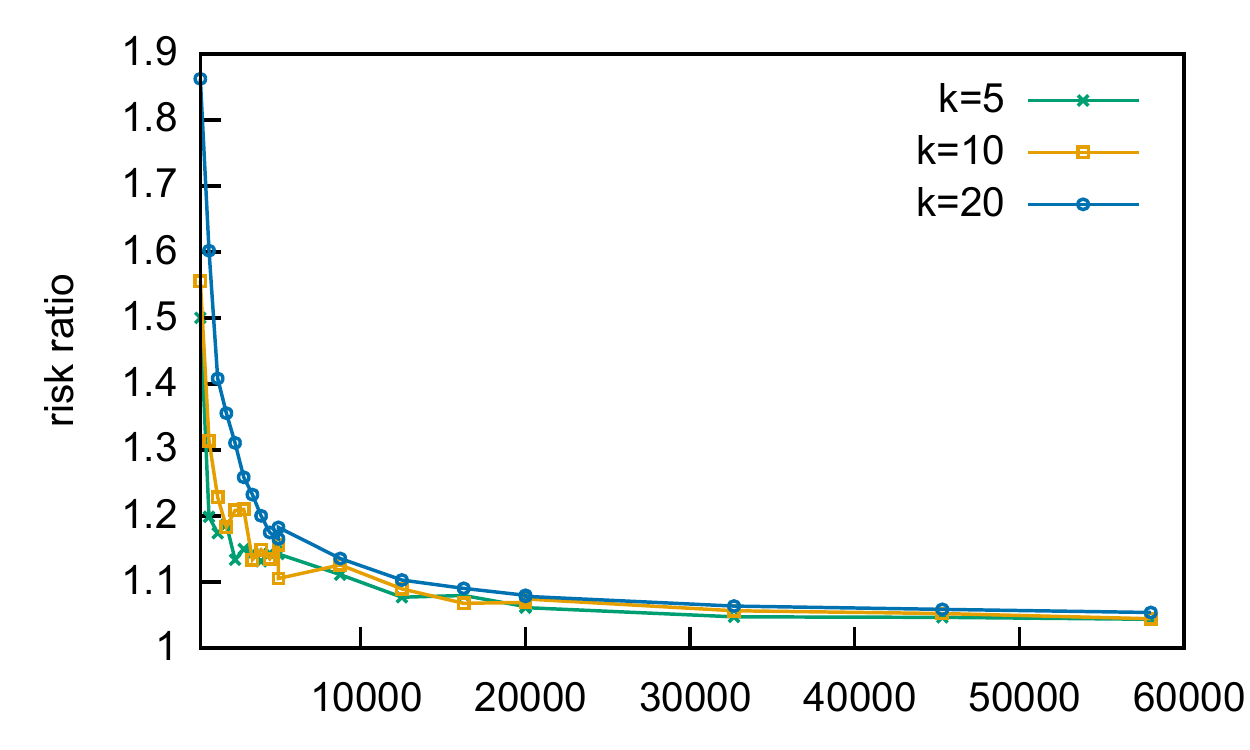}\\
  \includegraphics[width = \columnwidth]{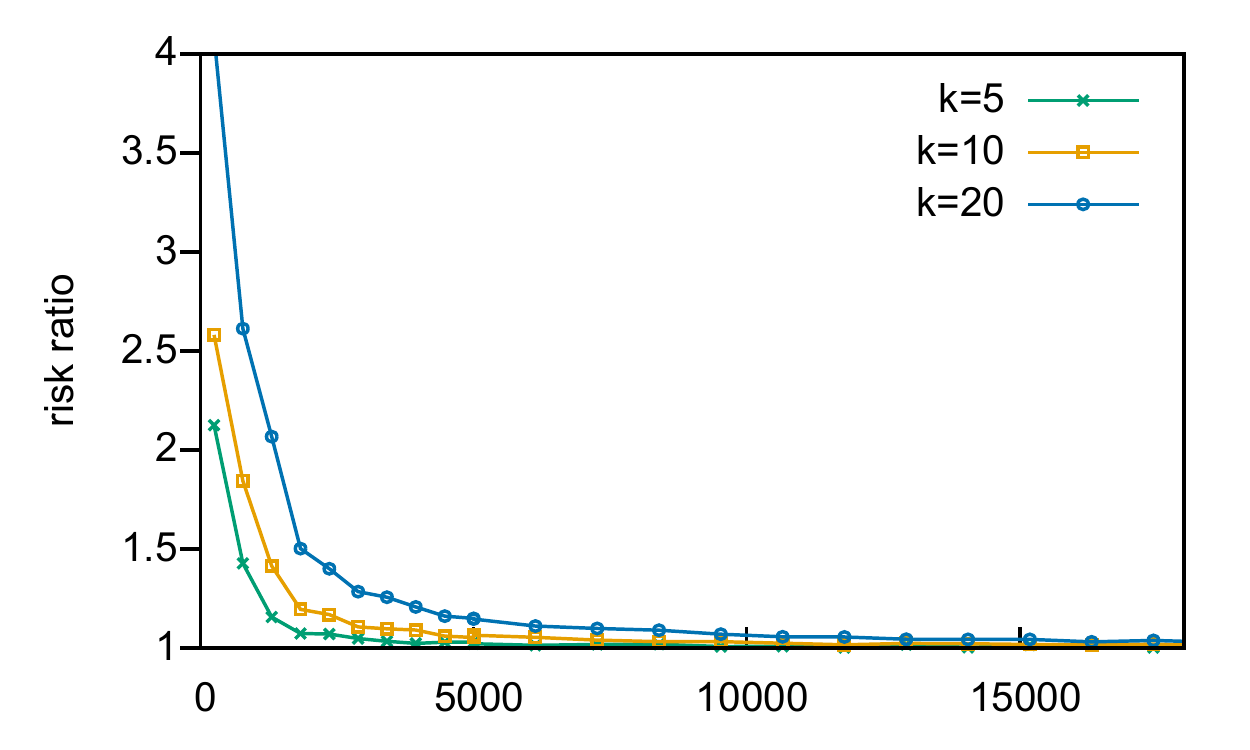}\\
  \caption{Risk ratio between \algname\ with $k$-medoids and offline $k$-medoids for large stream sizes, as a function of the stream size, for various values of $k$. Top to bottom: \texttt{MNIST, Covertype, Census}.}
  \label{fig:expfull}
  \end{center}
\end{figure}

\begin{figure}[t]
  \begin{center}
  \includegraphics[width = \columnwidth]{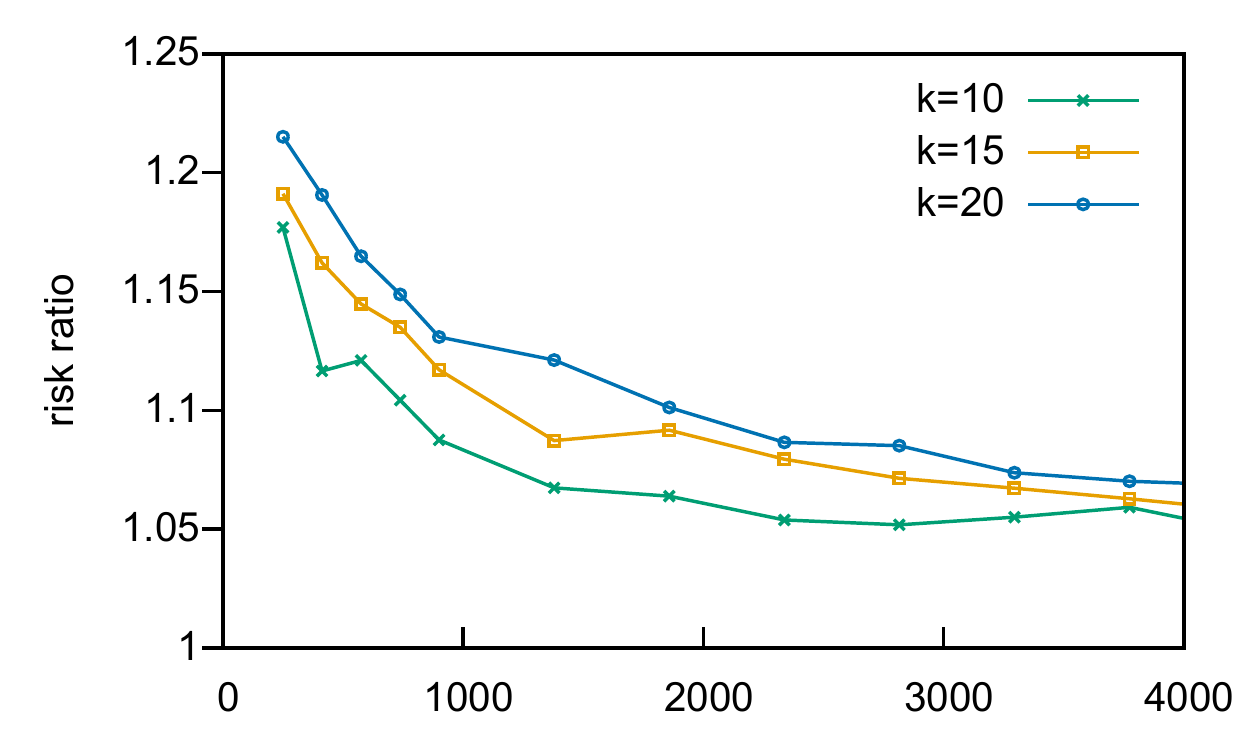}\\
  \includegraphics[width = \columnwidth]{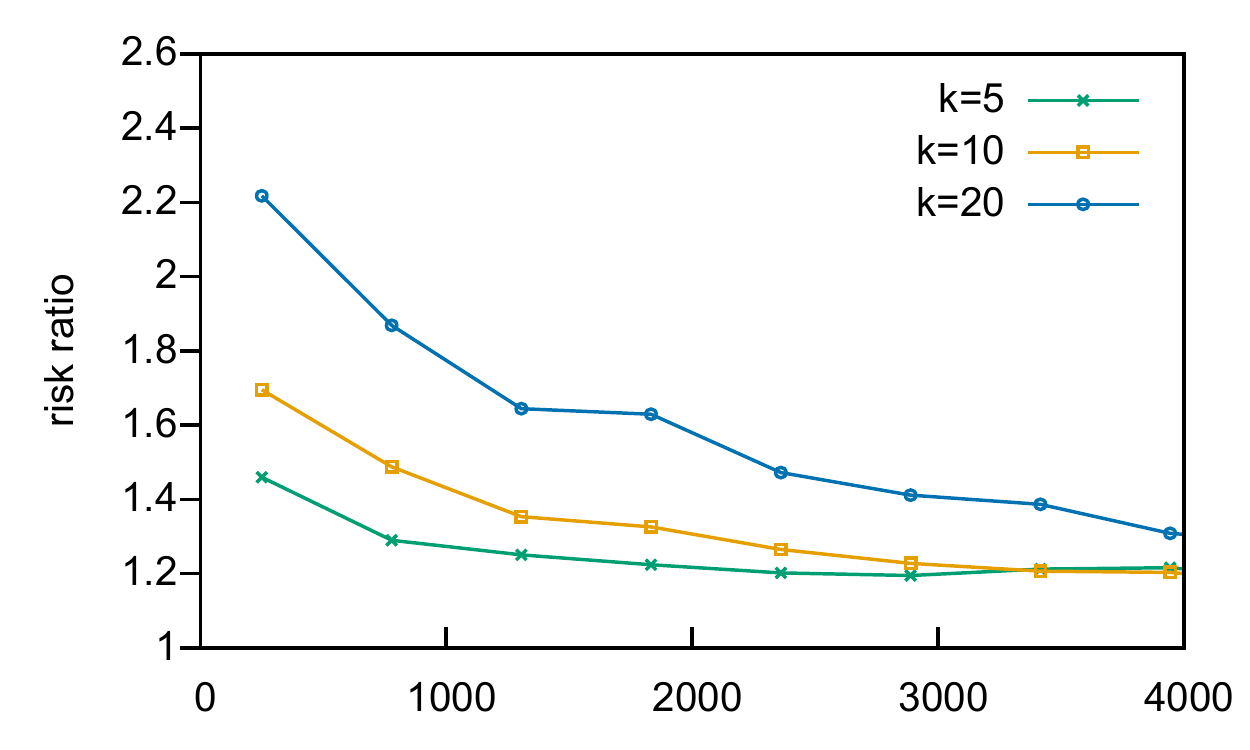}\\
  \includegraphics[width = \columnwidth]{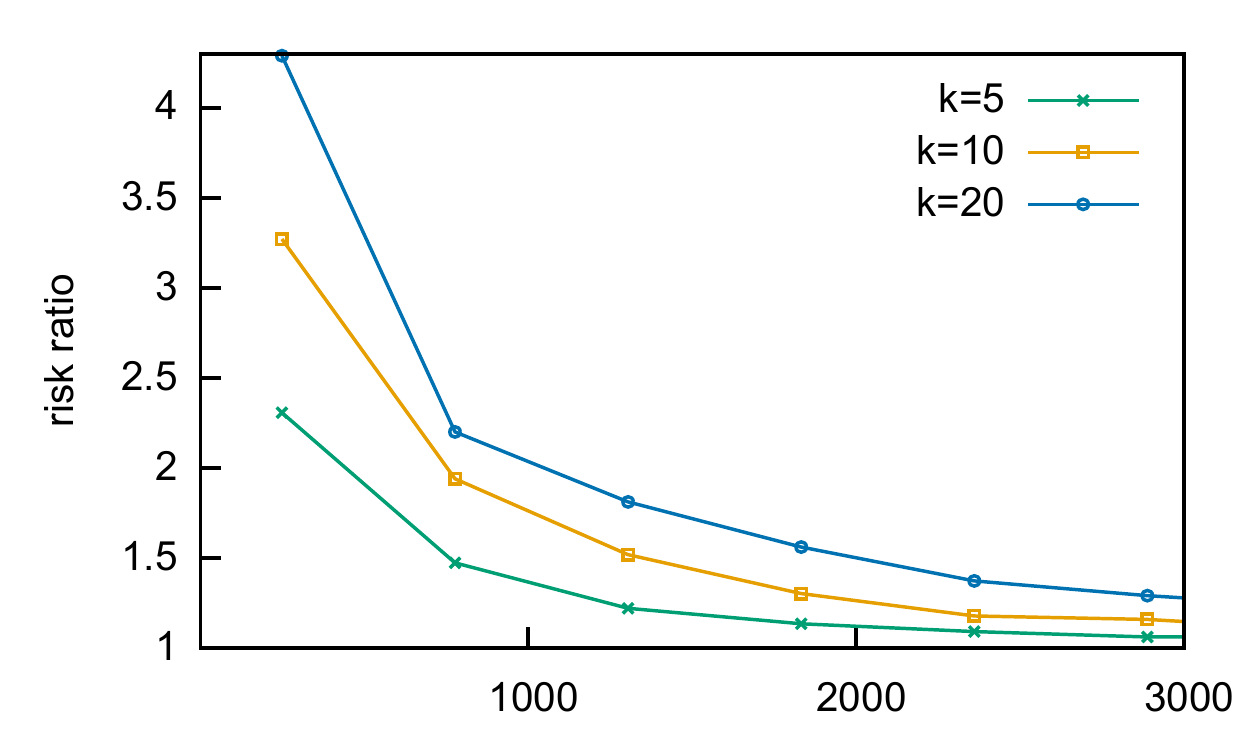}\\
  \caption{Risk ratio between \algname\ with BIRCH and offline BIRCH as a function of the stream size, for various values of $k$. Top to bottom: \texttt{MNIST, Covertype, Census}.}
  \label{fig:expbirch}
  \end{center}
\end{figure}
\vfill

\begin{figure}[t]
  \begin{center}
  \includegraphics[width = \columnwidth]{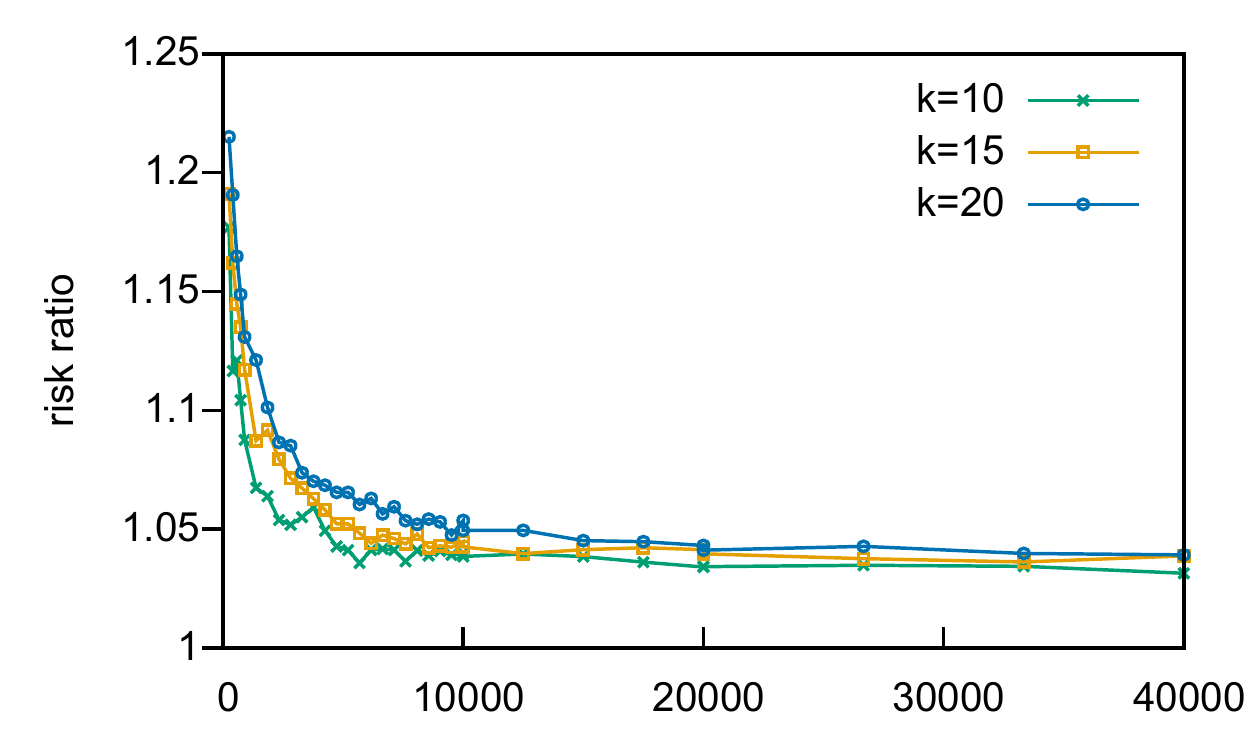}\\
  \includegraphics[width = \columnwidth]{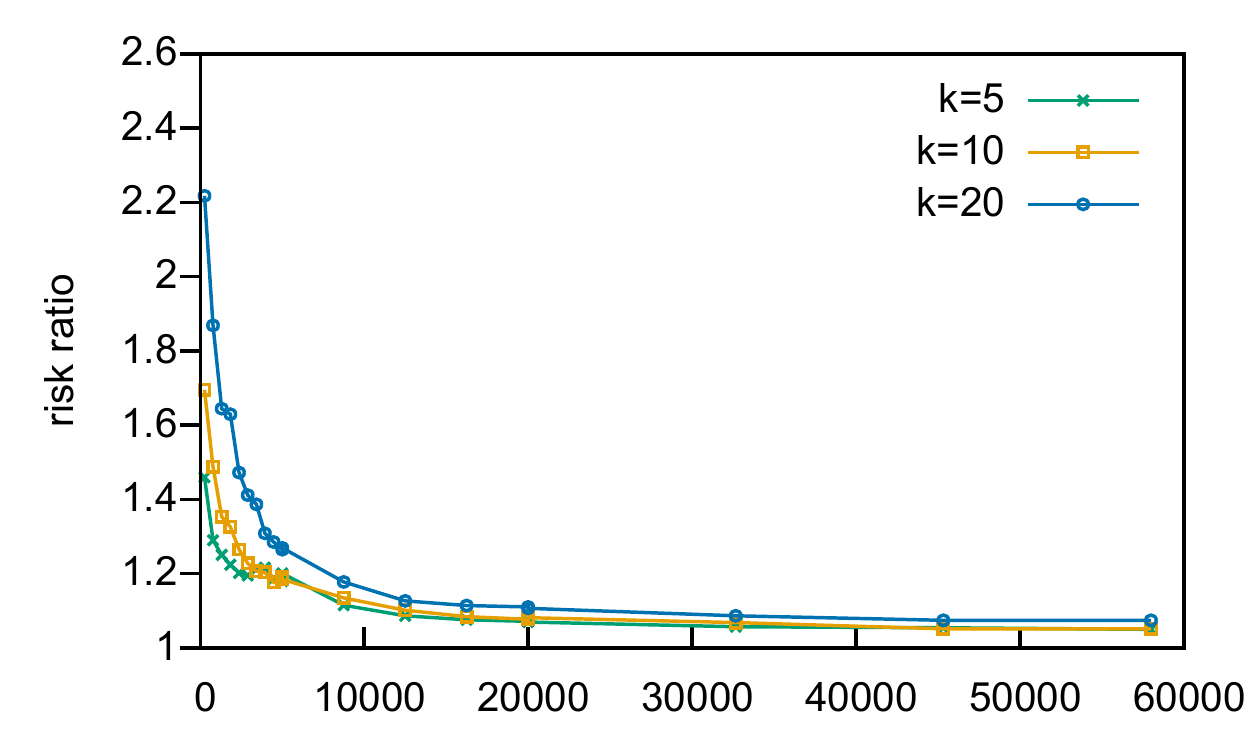}\\
  \includegraphics[width = \columnwidth]{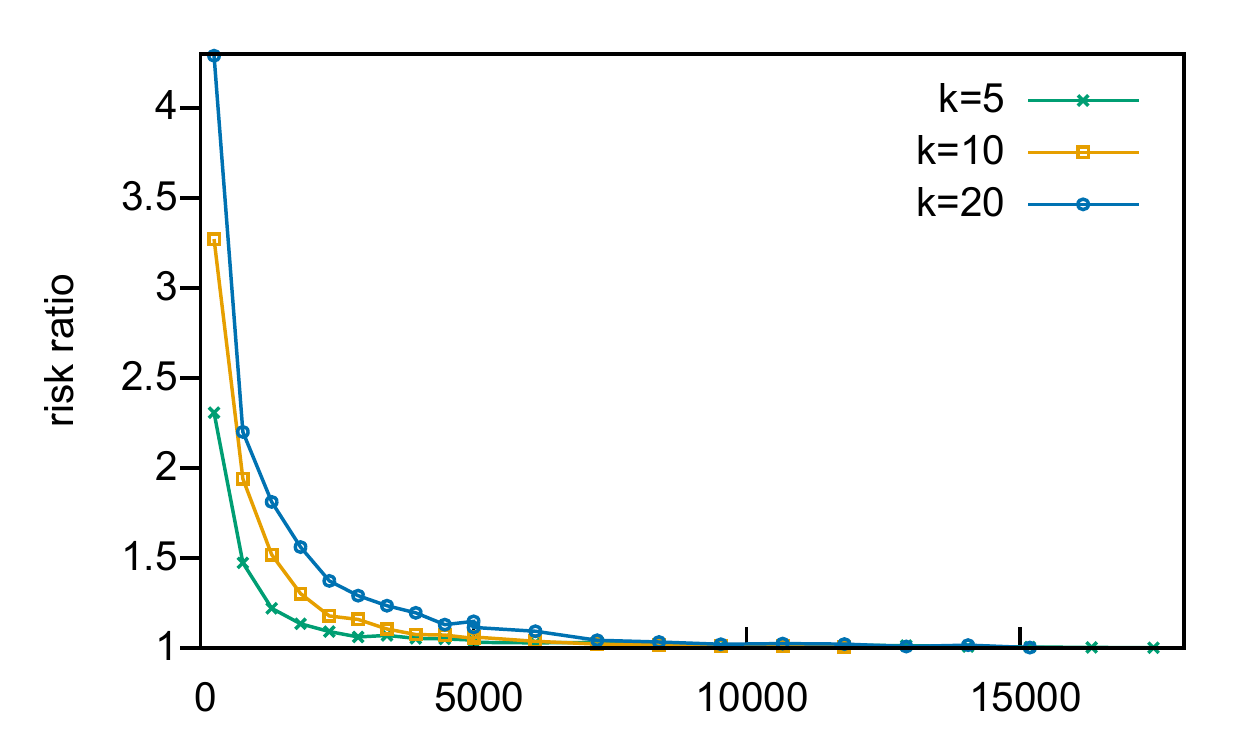}\\
  \caption{Risk ratio between \algname\ with BIRCH and offline BIRCH for large stream sizes, as a function of the stream size, for various values of $k$. Top to bottom: \texttt{MNIST, Covertype, Census}.}
  \label{fig:expfullbirch}
  \end{center}
\end{figure}
\vfill

\end{document}